\documentclass{article}

  \usepackage[preprint]{neurips_2026}

\usepackage[utf8]{inputenc} % allow utf-8 input
\usepackage[T1]{fontenc}    % use 8-bit T1 fonts
\usepackage{hyperref}       % hyperlinks
\usepackage{url}            % simple URL typesetting
\usepackage{booktabs}       % professional-quality tables
\usepackage{amsfonts}       % blackboard math symbols
\usepackage{nicefrac}       % compact symbols for 1/2, etc.
\usepackage{microtype}      % microtypography
\usepackage{multirow}
\usepackage{tikz}

\usepackage{mathtools}
\usepackage{amssymb}
\usepackage{xspace}

\usepackage{algorithm}
\usepackage{algpseudocode}
\usepackage{pifont}
\usepackage{graphicx}

\PassOptionsToPackage{table}{xcolor}
\usepackage{xcolor}         % colors
\usepackage{colortbl}
\usepackage{wrapfig}
\usepackage{amsthm}

\usepackage{tocloft}
\usepackage{titletoc}

\newcommand{\ours}{\texttt{FedeKD}\xspace}

\definecolor{lightblue}{RGB}{220,235,255}

\newtheorem{proposition}{Proposition}

\newtheorem{corollary}{Corollary}

\title{FedeKD: Energy-Based Gating for Robust Federated Knowledge Distillation under Heterogeneous Settings}

\author{%
  Quang-Huy~Nguyen \\
  Department of Computer Science and \\
  Software Engineering \\
  Auburn University\\
  Auburn, AL 36849 \\
  \texttt{hqn0001@auburn.edu} \\
  \And
   Jiaqi~Wang\thanks{Co-corresponding authors.} \\
  Department of Computer Science and \\
  Software Engineering \\
  Auburn University\\
  Auburn, AL 36849 \\
  \texttt{jqwang@auburn.edu} \\
  \And
   Wei-Shinn~Ku$^{\ast}$ \\
  Department of Computer Science and \\
  Software Engineering \\
  Auburn University\\
  Auburn, AL 36849 \\
  \texttt{wzk0004@auburn.edu} \\
}

\begin{document}

\maketitle

\begin{abstract}
Federated learning (FL) operates in heterogeneous environments, where variations in data distributions and asymmetric model design often result in negative transfer.
While federated knowledge distillation (FKD) avoids direct model parameter sharing, existing methods typically rely on public datasets or assume that transferred knowledge is uniformly reliable, which limits their robustness in practice.
This paper presents \ours, a reliability-aware FKD framework that makes sample-wise trust estimation an explicit component of knowledge transfer, without relying on additional public data. Each client maintains a high-capacity private model for local learning and a lightweight shared proxy model for cross-client knowledge exchange. During training, proxy models are aggregated on the server to form a global proxy, which is then used to guide updates of the private models.
At the core of \ours is an energy-based gating mechanism that converts task-specific private-proxy disagreement into sample-wise trust weights for backward distillation. %, using entropy-calibrated distributional disagreement for classification and continuous prediction disagreement for regression.
This mechanism enables sample-wise weighting of knowledge transfer, where the proxy model contributes more to reliable samples while down-weighting unreliable ones.
Extensive experiments on six real-world datasets demonstrate that \ours significantly reduces negative transfer under heterogeneous settings while maintaining strong predictive performance.
\end{abstract}

\section{Introduction}

Federated learning (FL) enables multiple clients to collaboratively train models without sharing raw data, making it well-suited for privacy-sensitive applications. However, real-world deployments involve significant heterogeneity in data distributions and model asymmetry, making reliable knowledge sharing challenging. Canonical parameter aggregation methods, including FedAvg~\cite{mcmahan2017communication} and FedProx~\cite{li2020federated}, often suffer from degraded performance and unstable convergence under such heterogeneity.

Federated knowledge distillation (FKD) offers an alternative by transferring knowledge through model outputs rather than parameters. However, in heterogeneous settings, the quality of transferred knowledge can vary significantly across samples and clients. This calls for treating knowledge reliability as a first-class object in FKD, rather than assuming that all transferred signals should influence local learning equally. Teacher models may produce misleading signals due to distribution shifts, limited local data, or architectural differences. Blindly transferring such knowledge can lead to severe negative transfer (i.e., performance degradation compared to local training) across clients. 
Existing FKD methods typically assume that transferred knowledge is uniformly reliable, overlooking the variability in knowledge quality across samples and clients.
This challenge mirrors human learning, where guidance becomes unreliable when a teacher's expertise is misaligned with the learner's context or when the guidance provides little actionable information.

%Wang et al.~\cite{wang2024bridging} propose an asymmetric private–proxy FL paradigm, where each client maintains a high-capacity private model and a lightweight shared proxy model for communication. This design removes the need for public data, reduces communication cost through compact proxies, and avoids sharing private model parameters or sensitive intermediate representations. However, it does not explicitly model the reliability of transferred knowledge at the sample level, limiting its ability to prevent harmful transfer in heterogeneous settings (see Section~\ref{sec:related_work}).
In this work, we seek to answer the fundamental question: \textit{when should a teacher model be trusted to guide knowledge transfer?} Our key insight is inspired by a simple real-world principle: a teacher does not treat all knowledge equally, but instead emphasizes the parts they understand best and are most confident in.
Building on this insight, we introduce a \underline{\textbf{Fed}}erated learning framework with \underline{\textbf{e}}nergy-gated \underline{\textbf{K}}nowledge \underline{\textbf{D}}istillation (\ours) to dynamically determine how much each proxy prediction should influence each private-model update (Figure~\ref{fig:figure1}). \ours achieves robust knowledge transfer across heterogeneous clients and mitigates negative transfer through adaptive, sample-wise weighting.

\begin{figure*}[t]
\centering
    \includegraphics[width=\textwidth]{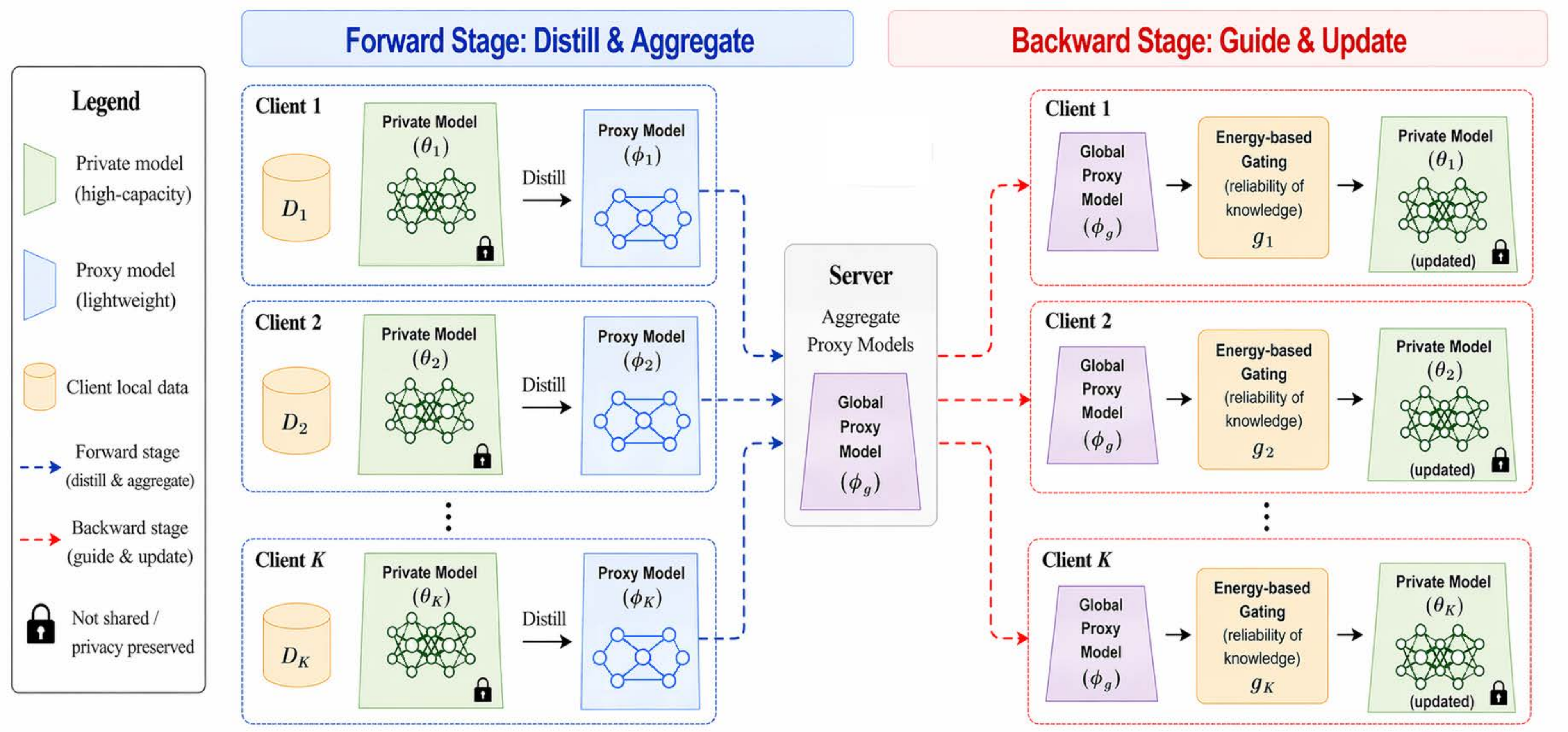}
    \caption{Framework of \ours. Private Model denotes a higher-capacity network for local learning, while Proxy Model is a lightweight network used for communication and aggregation across clients. See Appendix~\ref{app:study_design} for details.}
    \label{fig:figure1}
\end{figure*}

\ours operates in two stages. In the \underline{forward} stage, each client distills knowledge from its private model into a proxy model, which is aggregated on the server to form a global proxy.
In the \underline{backward} stage, the global proxy guides updates of the private model through an energy-gated mechanism. This process enables \textbf{sample-wise trust-weighted knowledge transfer}, where the model relies more on the proxy when the transferred knowledge appears reliable and less when it appears unreliable.

To sum up, the contributions of this work are twofold:
\begin{itemize}
    \item We introduce a reliability-aware backward distillation objective that shifts the focus from how to exchange knowledge to how much each transferred signal should be trusted. To support this objective without requiring public data, we design \ours with an asymmetric private-proxy architecture. The objective is implemented via a batch-normalized energy gate that maps task-specific private-proxy disagreement into continuous sample-wise trust weights. For classification, the energy uses entropy-calibrated distributional disagreement; for regression, it uses continuous prediction disagreement. This mechanism down-weights unreliable knowledge while preserving informative signals, thereby mitigating negative transfer in heterogeneous FL environments.
    
    \item Extensive experiments on six real-world datasets demonstrate that \ours significantly improves both average-case and worst-case negative transfer while maintaining strong predictive performance. Additional ablation studies show that the proposed gating mechanism remains effective across different heterogeneity levels and hyperparameter settings.
\end{itemize}
\section{Related Works} \label{sec:related_work}
\paragraph{Parameter aggregation methods.}
Parameter aggregation remains the dominant paradigm in FL, where a global model is obtained by aggregating locally trained client models. FedAvg~\cite{mcmahan2017communication} and FedProx~\cite{li2020federated} have demonstrated strong empirical performance under homogeneous settings. FedDyn~\cite{durmus2021federated} further improves robustness under heterogeneous data by introducing a dynamic regularization term that aligns local and global objectives during training.
These methods assume that all clients share a common model architecture and operate under relatively homogeneous data distributions, allowing model updates to be directly aggregated to improve global performance. However, such assumptions are often violated in real-world FL environments. In heterogeneous settings, direct parameter aggregation often fails due to distribution mismatch across clients, leading to degraded convergence and suboptimal performance. In addition, sharing model parameters may expose sensitive information about local data distributions, raising potential privacy concerns in FL settings.

In contrast, \ours avoids direct aggregation of private model parameters and instead performs aggregation in the proxy space, using knowledge distillation as the primary mechanism for knowledge exchange. This design reduces the exposure of private model parameters, thereby enhancing privacy preservation in federated settings. Furthermore, instead of assuming uniform reliability across clients or samples, \ours approximates the quality of transferred knowledge via task-specific model disagreement, using entropy calibration for classification and continuous prediction disagreement for regression.

\paragraph{Federated Knowledge Distillation and Heterogeneous FL.}
To address the limitations of parameter aggregation under heterogeneous settings, recent works have explored alternative communication paradigms based on knowledge distillation. Instead of directly aggregating model parameters, these methods exchange auxiliary information such as logits~\cite{huang2022learn}, class scores~\cite{li2019fedmd}, or label-wise representations~\cite{yi2023fedgh, tan2022fedproto} to facilitate collaboration across heterogeneous models. More recent approaches further extend this idea through techniques such as  ensemble learning~\cite{lin2020ensemble}, mutual learning~\cite{yu2022resource, shen2023federated}, or model reassembly~\cite{wang2023towards}. While these methods improve flexibility in heterogeneous FL, they often rely on additional public datasets or shared data representations to stabilize training. However, such reliance introduces two key limitations in practice. (1) The availability and selection of suitable public data remain challenging in real-world applications. (2) Exchanging intermediate representations or model-related information may expose sensitive information about local data distributions, raising privacy concerns. Moreover, these approaches typically assume that transferred knowledge is uniformly reliable, overlooking the variability in knowledge quality across samples and clients. In contrast, \ours does not require additional public data and restricts communication to lightweight proxy models.

FedType~\cite{wang2024bridging} is the closest prior work, as it also considers asymmetrical reciprocity between small proxy models and large client models. However, \ours differs from FedType in several key aspects.
\underline{First}, the key methodological difference lies in how reliability is represented, optimized, and coupled to the private-model update. FedType relies on two conformal models to construct discrete, sample-dependent uncertainty sets for the client and proxy models, whereas %, which do not explicitly capture sample-wise variations across clients. 
\ours replaces set-valued reliability filtering with a continuous training objective: task-specific private-proxy disagreement directly modulates the backward distillation loss through sample-level trust weights (Eq.~\ref{eq:trust_weight}). This formulation provides finer-grained control over knowledge transfer at the sample level, allowing the model to smoothly down-weight unreliable signals rather than relying on set-based filtering.
\underline{Second}, FedType's specific conformal-set reciprocity formulation is classification-oriented, as it is defined over discrete label spaces. In contrast, \ours defines reliability directly on model outputs, enabling the same gating principle to apply to classification via distributional disagreement and to regression via continuous prediction disagreement.
\section{FedeKD}
%Each client performs a supervised update on its private model.
%The proxy model is trained via forward knowledge distillation
%to mimic the private model outputs on the local data.

\subsection{Problem Setup}

We consider a FL setting with $K$ clients. Each client $k$ has access to a local dataset $\mathcal{D}_k = \{(x_i, y_i)\}$, which is not shared with other clients or the server. The goal is to collaboratively improve local models while preserving data privacy under heterogeneous data distributions. Each client maintains two models: a private model $f_k$ for the primary learning task and a lightweight proxy model $g_k$ for cross-client knowledge exchange. The proxy models share a common architecture across clients and are aggregated on the server, while private models remain local and are never shared.

\subsection{Overview of \ours}

At each communication round, \ours implements three sequential stages: forward proxy distillation, proxy aggregation, and energy-gated private-model update. In the forward stage, each client keeps its private model fixed and trains only the lightweight proxy model to mimic the current private model on local data. The proxy models are then uploaded to the server and aggregated to form a global proxy model, which is broadcast back to all clients. In the backward stage, each client updates its private model once using a combined objective consisting of the supervised loss and the energy-gated backward distillation loss from the global proxy. Algorithm~\ref{alg:fedekd} summarizes one communication round.

\subsection{Forward Proxy Distillation}

%Each client begins by updating its private model $f_k$ using supervised learning. The objective is defined as $\mathcal{L}_{\text{sup}} = \mathbb{E}_{(x,y)\sim \mathcal{D}_k} [\ell(f_k(x), y)]$, where $\ell(\cdot)$ denotes the task-specific loss. For classification tasks, $\ell$ corresponds to the cross-entropy loss, while for regression tasks, it is instantiated as mean squared error. 
%The proxy model $g_k$ is then trained to mimic the private model via forward knowledge distillation. For classification, this is achieved by minimizing $\mathcal{L}_{\text{FKD}} = \mathbb{E}_{x \sim \mathcal{D}_k} \big[ \mathrm{KL}(\sigma(f_k(x)) \,\|\, \sigma(g_k(x))) \big]$, where $\sigma(\cdot)$ denotes the softmax function. For regression, the distillation objective reduces to a squared error loss $\mathcal{L}_{\text{FKD}} = \mathbb{E}_{x \sim \mathcal{D}_k} \big[ \| f_k(x) - g_k(x) \|^2 \big]$. This step enables the proxy model to capture task-relevant knowledge from the private model while maintaining a shared representation space across clients.

At the beginning of each communication round, each client keeps its private model $f_k$ fixed and trains only the proxy model $g_k$ on local data. The proxy is trained to mimic the private model via forward knowledge distillation using a task-specific distillation loss. For classification, this corresponds to matching predictive distributions, while for regression, it reduces to matching continuous predictions. This step enables the proxy model to capture task-relevant knowledge from the private model while maintaining a shared representation space across clients.

\subsection{Proxy Aggregation}

After local updates, each client uploads its proxy model to the server. The server aggregates these proxy models to form a global proxy, which is then broadcast back to all clients. Importantly, aggregation is performed only in the proxy space, avoiding direct sharing of private model parameters and thereby reducing the exposure of sensitive information.

\subsection{Energy-Gated Backward Distillation}

To transfer knowledge from the global proxy back to the private model, \ours introduces a reliability-aware backward distillation objective that operationalizes sample-wise trust through energy-based gating. Given an input $x$, an energy score $E(x)$ is computed to measure the disagreement between the private and proxy models, i.e., $E(x) = \mathcal{E}(g(x), f_k(x))$, where $\mathcal{E}(\cdot)$ can be instantiated using various discrepancy measures.

For classification, we adopt an entropy-normalized symmetric KL divergence defined as
\begin{equation}
E(x)
=
\frac{\frac{1}{2}\big( \mathrm{KL}(p \,\|\, q) + \mathrm{KL}(q \,\|\, p) \big)}{H(p)+H(q)+\epsilon_H},
\label{eq:classification_energy}
\end{equation}
where $p=\sigma(f_k(x))$, $q=\sigma(g(x))$, $H(\cdot)$ denotes entropy, and $\epsilon_H=10^{-8}$ is a small numerical constant. 
%This formulation captures both prediction disagreement and uncertainty. 
This formulation measures private-proxy disagreement while calibrating its magnitude by the total predictive entropy of the two models.
For regression, we use squared error as a proxy for prediction disagreement, and the energy reduces to
\begin{equation}
E(x)=\frac{1}{2}\|f_k(x)-g(x)\|_2^2,
\label{eq:regression_energy}
\end{equation}
which provides a continuous measure of point-prediction disagreement. For the scalar-output regression tasks used in our experiments, this reduces to the squared difference between the private and proxy predictions. Unlike the classification energy, this regression energy does not model predictive uncertainty; instead, it measures functional disagreement between the private and proxy predictions.

\paragraph{Design rationale.}
The proposed classification energy function is based on a symmetric KL divergence normalized by the total entropy, as defined in Eq.~\eqref{eq:classification_energy}.
%This formulation has three key properties. First, the symmetric KL divergence ensures that the disagreement measure is invariant to the ordering of the private and proxy models, i.e., $E(p,q)=E(q,p)$. Second, normalization by the sum of entropies calibrates disagreement by the total predictive uncertainty. In particular, it reduces the tendency of high-entropy predictions to dominate the energy magnitude, thereby ensuring that disagreement is interpreted relative to the confidence of the models. Finally, this normalized form jointly captures both prediction mismatch and uncertainty, enabling more robust handling of low-quality knowledge.
This formulation has three key properties. First, the symmetric KL divergence ensures that the disagreement measure is invariant to the ordering of the private and proxy models, i.e., $E(p,q)=E(q,p)$. Second, entropy normalization makes the score a relative measure of disagreement rather than an absolute KL magnitude. For a fixed symmetric KL value, the normalized energy is larger when the private and proxy predictions are sharp and low-entropy, and smaller when their predictions are diffuse and high-entropy. Thus, the gate emphasizes confident contradictions between the private and proxy models, which are more likely to induce harmful transfer, while treating diffuse high-entropy disagreements as less decisive. Finally, the normalized form combines distributional mismatch with prediction specificity, enabling reliability-aware weighting of transferred knowledge.

The energy scores are then converted into sample-wise trust weights within each minibatch. For a minibatch $B=\{x_i\}_{i=1}^{|B|}$, let $E_i=E(x_i)$, $\mu_B=|B|^{-1}\sum_{i=1}^{|B|}E_i$, and $s_B=(|B|^{-1}\sum_{i=1}^{|B|}(E_i-\mu_B)^2)^{1/2}$. We first form a batch-normalized energy and then apply a logistic gate:
\begin{equation}
\widetilde{E}_i
=
\frac{E_i-\mu_B}{s_B+\epsilon_B},
\qquad
w_i
=
\rho(-\beta\widetilde{E}_i),
\qquad
\rho(t)=\frac{1}{1+\exp(-t)},
\label{eq:trust_weight}
\end{equation}
where $\epsilon_B=10^{-8}$ is a small numerical constant and $\beta>0$ controls the sharpness of the gate. This gate makes reliability batch-relative: a sample is trusted according to
how safe its transfer signal appears compared with other samples in the same minibatch, rather than according to a fixed global threshold.

The private model is updated using a weighted distillation objective defined as
\begin{equation}
\mathcal{L}_{\text{BKD}}
=
\mathbb{E}_{B\sim \mathcal{D}_k}
\left[
\frac{1}{|B|}
\sum_{i=1}^{|B|}
\operatorname{sg}(w_i)\ell_{\text{KD}}(x_i)
\right],
\label{eq:bkd_loss}
\end{equation}
where $\ell_{\text{KD}}(x_i)$ denotes the distillation loss between the proxy and private models, and $\operatorname{sg}(\cdot)$ denotes stop-gradient. We use $\ell_{\text{KD}}(x_i)=\mathrm{KL}(q_i\|p_i)$ for classification, with $p_i=\sigma(f_k(x_i))$ and $q_i=\sigma(g(x_i))$, and squared error for regression. The full private-model update combines the supervised loss with the gated distillation term:
\begin{equation}
\mathcal{L}_{\text{private}}
=
\mathcal{L}_{\text{sup}}
+
\lambda_{\text{kd}}\mathcal{L}_{\text{BKD}}.
\label{eq:private_loss}
\end{equation}
The stop-gradient operator ensures that the gate acts as a reliability weight rather than introducing an additional optimization path through the energy function. Thus, for each sample, the gate rescales the distillation gradient but does not reverse the underlying distillation direction. We provide stability-oriented properties of this mechanism in Appendix~\ref{sec:gate_theory}, including batch-relative monotonicity, bounded and non-degenerate trust weights, a variational interpretation of the logistic gate, and output-level influence results for both classification and regression.

Intuitively, samples with low relative disagreement are assigned higher trust weights, while high relative-disagreement samples are down-weighted. In practice, this mechanism modulates the influence of the proxy model on each sample by reweighting the distillation loss according to the relative level of agreement between the private and proxy models.

\paragraph{Properties of the energy-based gating function.}

The batch-normalized logistic gate should be interpreted as a relative trust assignment within each minibatch. Samples with energy below the batch mean receive weights above $1/2$, whereas samples with energy above the batch mean receive weights below $1/2$. Moreover, for any two samples in the same minibatch, $E_i\le E_j$ implies $w_i\ge w_j$, so the gate preserves the intended ordering of reliability. This batch-relative monotonicity property is formalized in Proposition~\ref{prop:batch_relative_monotonicity}.
The logistic gate also produces bounded and non-degenerate trust weights. In particular, the weights remain strictly between $0$ and $1$, avoiding hard rejection while still reducing the influence of high-energy samples. This property is formalized in Proposition~\ref{prop:nondegenerate_trust_weights}. In addition, the logistic form admits a variational interpretation as the solution to an entropy-regularized soft trust assignment problem, as shown in Proposition~\ref{prop:logistic_gate_variational}.
Finally, because the gate is detached during backpropagation, it scales the distillation loss without introducing an additional gradient path through the energy function. As a result, the gated per-sample distillation gradient is a positive scalar multiple of the ungated distillation gradient and cannot reverse its direction. We formalize this behavior in Proposition~\ref{prop:detached_gate_direction}, with output-level influence results for regression and classification in Corollaries~\ref{cor:regression_output_influence} and~\ref{cor:classification_output_influence}. Together, these stability properties connect the proposed objective to the empirical robustness of \ours: unreliable proxy guidance is not discarded by a hard rule, but is systematically assigned lower influence while useful distillation signals remain active.

%\paragraph{Observation.} The weighted distillation loss satisfies $\mathcal{L}_{\text{BKD}} = \mathbb{E}[w(x)\ell(x)] \le \mathbb{E}[\ell(x)]$.

These properties help explain the empirical robustness of \ours observed in Section~\ref{sec:benchmark}, where unreliable samples tend to receive lower relative weights across heterogeneous settings.

\section{Benchmark} \label{sec:benchmark}
\subsection{Settings}
\paragraph{Benchmarks and Data Partitioning.} 
We evaluate the effectiveness and scalability of \ours on six real-world datasets most frequently used by prior FL works~\cite{lu2023federated,wang2024bridging,minpersonalized,nguyen2026conformalizedneuralnetworksfederated}, ensuring a fair comparison with established baselines. The six datasets span two tasks: classification (FashionMNIST~\cite{xiao2017fashion}, CIFAR-10~\cite{krizhevsky2009learning}, OCTMNIST~\cite{kermany2018identifying}, and OrganAMNIST~\cite{xu2019efficient}) and regression (RetinaMNIST~\cite{dataset20202nd} and Diabetic Retinopathy (Regular Fundus)~\cite{woerner2024comprehensive}). To simulate realistic data heterogeneity, we apply a Dirichlet partition with concentration parameter $\alpha$ to induce label shift in classification and covariate shift in regression tasks. Smaller values of $\alpha$ correspond to more heterogeneous data distributions across clients. Detailed data statistics and partition descriptions are provided in Appendix~\ref{app:dat_stats}.

\paragraph{Experimental Design for Heterogeneity.}
Following prior work in cross-silo FL~\cite{wang2025asymmetrical,nguyen2026conformalizedneuralnetworksfederated}, we consider a system with six clients and adopt an asymmetric model design within each client, consisting of a high-capacity private model and a lightweight proxy model. All clients share identical model architectures. Model asymmetry arises from the distinct roles and capacities of the private and proxy models, which create discrepancies between models during knowledge transfer. Further details on the model architectures and the controlled heterogeneity setup are provided in Appendix~\ref{app:study_design}.

\paragraph{Hyperparameter settings.}
For \ours, we use $\beta = 1$ and $\lambda_{\text{kd}} = 1$ across all experiments without additional tuning. All baselines are evaluated using their standard or publicly reported hyperparameter configurations. Models are trained using Adam with a learning rate of $10^{-4}$, using $5$ communication rounds with $2$ local epochs per round across all methods to ensure consistent and computationally comparable evaluation. The local minibatch size is set to $B=64$ for all training procedures, while a larger batch size of 256 is used for evaluation.

\subsection{Evaluation Metrics}

We evaluate model performance under both classification and regression settings with a focus on negative transfer and robustness across heterogeneous clients, while also reporting accuracy and RMSE as measures of predictive performance.

\paragraph{Classification.}
We report accuracy, defined as $\text{Acc} = \frac{1}{N} \sum_{i=1}^{N} \mathbb{I}(\hat{y}_i = y_i)$, where $\hat{y}_i$ and $y_i$ denote the predicted and ground-truth labels. To explicitly quantify negative transfer, we compute the performance change relative to local training as $\Delta_k = \text{Acc}_k^{\text{method}} - \text{Acc}_k^{\text{local}}$, where negative values indicate degradation. We aggregate these values across clients using their average (Avg $\Delta$) and worst case (Worst $\Delta$), and further report the 10th percentile ($\text{P10 } \Delta$) over clients to capture lower-tail behavior. 
%We also report RHR, defined as $\text{RHR} = \frac{\mathbb{E}[|\Delta^{\text{naive}}|] - \mathbb{E}[|\Delta^{\text{method}}|]}{\mathbb{E}[|\Delta^{\text{naive}}|]}$. 
Client-level robustness is additionally assessed using the average $\text{Avg} = \frac{1}{K} \sum_{k=1}^{K} \text{Acc}_k$ and worst-case $\text{Worst} = \min_k \text{Acc}_k$ accuracy.

\paragraph{Regression.}
We report root mean squared error (RMSE), defined as $\text{RMSE}=\sqrt{\frac{1}{N}\sum_{i=1}^{N}(\hat{y}_i-y_i)^2}$. Following the same evaluation principle, we define $\Delta_k = \text{RMSE}_k^{\text{method}} - \text{RMSE}_k^{\text{local}}$, where positive values indicate degradation. In this setting, we focus on the upper tail of the distribution and report the 90th percentile ($\text{P90 } \Delta$) over clients, along with the average (Avg. $\Delta$) and worst-case (Worst $\Delta$) values of $\Delta_k$. We measure robustness across clients using the average $\text{Avg} = \frac{1}{K} \sum_{k=1}^{K} \text{RMSE}_k$ and the worst-case $\text{Worst} = \max_k \text{RMSE}_k$.

\subsection{Robustness to Negative Transfer}

\begin{table}[!t]
\caption{
Measurement of average-case negative transfer (Avg $\Delta$) for classification tasks (higher is better). 
Results are reported as mean$_{\pm\text{std}}$ over 10 independent runs.
\textbf{Bold} and \underline{underline} denote the best and second-best results in each column, respectively. 
\ours consistently achieves the strongest reduction in negative transfer under severe heterogeneity ($\alpha = 0.1$), remains highly competitive at moderate heterogeneity ($\alpha = 0.3$), and performs comparably to the best alternatives when data distributions become more homogeneous ($\alpha = 0.5$). Notably, \ours exhibits consistently lower variance across runs, indicating more stable and reliable performance compared to the baselines.
}
\label{tab:neg_transfer_main}

\centering
\small
\setlength{\tabcolsep}{3pt}
\resizebox{\textwidth}{!}{
\begin{tabular}{c|c|ccc|ccc|ccc|ccc}
\hline
\textbf{\shortstack{Agg.\\Method}} & \textbf{Model}
& \multicolumn{3}{c|}{\textbf{FashionMNIST}}
& \multicolumn{3}{c|}{\textbf{CIFAR-10}}
& \multicolumn{3}{c|}{\textbf{OCTMNIST}}
& \multicolumn{3}{c}{\textbf{OrganAMNIST}} \\

\cline{3-14}
& & $\alpha=0.1$ & $\alpha=0.3$ & $\alpha=0.5$
  & $\alpha=0.1$ & $\alpha=0.3$ & $\alpha=0.5$
  & $\alpha=0.1$ & $\alpha=0.3$ & $\alpha=0.5$
  & $\alpha=0.1$ & $\alpha=0.3$ & $\alpha=0.5$ \\

\hline

FedDyn & -
& $-0.8309_{\pm 0.0500}$ & $-0.7933_{\pm 0.0187}$ & $-0.7666_{\pm 0.0143}$
& $-0.6560_{\pm 0.0642}$ & $-0.5738_{\pm 0.0390}$ & $-0.5285_{\pm 0.0302}$
& $-0.6328_{\pm 0.1168}$ & $-0.5242_{\pm 0.0753}$ & $-0.4790_{\pm 0.0527}$
& $-0.8317_{\pm 0.0623}$ & $-0.8252_{\pm 0.0159}$ & $-0.8080_{\pm 0.0247}$ \\

FedProx & -
& $-0.4506_{\pm 0.0700}$ & $-0.2468_{\pm 0.0645}$ & $-0.1821_{\pm 0.0390}$
& $-0.5553_{\pm 0.0775}$ & $-0.3480_{\pm 0.0403}$ & $-0.2788_{\pm 0.0328}$
& $-0.4989_{\pm 0.1108}$ & $-0.2965_{\pm 0.1259}$ & $-0.1909_{\pm 0.0424}$
& $-0.5313_{\pm 0.1100}$ & $-0.3061_{\pm 0.0629}$ & $-0.2287_{\pm 0.0364}$ \\

FedProx & FedType
& $-0.0096_{\pm 0.0092}$ & $-0.0129_{\pm 0.0076}$ & $-0.0145_{\pm 0.0058}$
& $-0.0054_{\pm 0.0174}$ & $-0.0042_{\pm 0.0067}$ & \underline{$0.0011_{\pm 0.0049}$}
& $-0.0264_{\pm 0.0168}$ & $-0.0297_{\pm 0.0164}$ & $-0.0320_{\pm 0.0221}$
& $-0.0122_{\pm 0.0110}$ & $-0.0030_{\pm 0.0114}$ & \underline{$-0.0016_{\pm 0.0090}$} \\

FedAvg & -
& $-0.4604_{\pm 0.0843}$ & $-0.2545_{\pm 0.0629}$ & $-0.1921_{\pm 0.0353}$
& $-0.5525_{\pm 0.0793}$ & $-0.3540_{\pm 0.0431}$ & $-0.2840_{\pm 0.0308}$
& $-0.5044_{\pm 0.1116}$ & $-0.3003_{\pm 0.1252}$ & $-0.2001_{\pm 0.0448}$
& $-0.5482_{\pm 0.1082}$ & $-0.3193_{\pm 0.0669}$ & $-0.2408_{\pm 0.0404}$ \\

FedAvg & FedType
& \underline{$-0.0087_{\pm 0.0089}$} & \underline{$-0.0125_{\pm 0.0083}$} & \underline{$-0.0130_{\pm 0.0056}$}
& \underline{$-0.0049_{\pm 0.0120}$} & \underline{$-0.0023_{\pm 0.0062}$} & $\mathbf{0.0030}_{\pm 0.0051}$
& \underline{$-0.0247_{\pm 0.0105}$} & \underline{$-0.0284_{\pm 0.0178}$} & \underline{$-0.0306_{\pm 0.0183}$}
& \underline{$-0.0116_{\pm 0.0108}$} & $\mathbf{-0.0014}_{\pm 0.0120}$ & $\mathbf{0.0010}_{\pm 0.0079}$ \\ 

\rowcolor{gray!12}
FedAvg & \textbf{\ours}
& $\mathbf{-0.0065}_{\pm 0.0066}$ & $\mathbf{-0.0035}_{\pm 0.0050}$ & $\mathbf{0.0031}_{\pm 0.0035}$
& $\mathbf{-0.0033}_{\pm 0.0098}$ & $\mathbf{-0.0017}_{\pm 0.0051}$ & $-0.0016_{\pm 0.0048}$
& $\mathbf{-0.0040}_{\pm 0.0084}$ & $\mathbf{-0.0013}_{\pm 0.0089}$ & $\mathbf{-0.0024}_{\pm 0.0096}$
& $\mathbf{-0.0078}_{\pm 0.0060}$ & \underline{$-0.0021_{\pm 0.0064}$} & $-0.0024_{\pm 0.0048}$ \\

\hline
\end{tabular}
}
\end{table}

\begin{table}[!t]
\caption{
Measurement of Avg $\Delta$ for regression tasks (lower is better). 
\ours consistently and stably achieves the lowest Avg $\Delta$.
}
\label{tab:neg_transfer_regression}

\centering
\small
\setlength{\tabcolsep}{3pt}
\resizebox{\textwidth}{!}{
\begin{tabular}{c|c|ccc|ccc}
\hline
\textbf{\shortstack{Agg.\\Method}} & \textbf{Model}
& \multicolumn{3}{c|}{\textbf{RetinaMNIST}}
& \multicolumn{3}{c}{\textbf{Diabetic Retinopathy}} \\

\cline{3-8}
& & $\alpha=0.1$ & $\alpha=0.3$ & $\alpha=0.5$
  & $\alpha=0.1$ & $\alpha=0.3$ & $\alpha=0.5$ \\

\hline

FedDyn & -
& $0.6091_{\pm 0.2355}$ & $0.8470_{\pm 0.2508}$ & $0.7308_{\pm 0.1350}$
& $1.3174_{\pm 2.3447}$ & $0.7513_{\pm 0.3794}$ & $0.6559_{\pm 0.1899}$ \\

FedProx & -
& \underline{$0.1445_{\pm 0.0714}$} & $0.1509_{\pm 0.0941}$ & $0.1145_{\pm 0.0409}$
& $0.1248_{\pm 0.0765}$ & $0.1700_{\pm 0.0764}$ & \underline{$0.1506_{\pm 0.0590}$} \\

FedAvg & -
& $0.1459_{\pm 0.0726}$ & \underline{$0.1467_{\pm 0.0942}$} & \underline{$0.1131_{\pm 0.0403}$}
& \underline{$0.1216_{\pm 0.0739}$} & \underline{$0.1682_{\pm 0.0759}$} & $0.1507_{\pm 0.0585}$ \\

\rowcolor{gray!12}
FedAvg & \textbf{\ours}
& $\mathbf{-0.0019}_{\pm 0.0474}$ & $\mathbf{0.0034}_{\pm 0.0619}$ & $\mathbf{-0.0178}_{\pm 0.0353}$
& $\mathbf{-0.0606}_{\pm 0.0935}$ & $\mathbf{-0.0050}_{\pm 0.0611}$ & $\mathbf{-0.0058}_{\pm 0.0798}$ \\

\hline
\end{tabular}
}
\end{table}

Tables~\ref{tab:neg_transfer_main} and~\ref{tab:neg_transfer_regression} report the primary average-case negative-transfer results for classification and regression, respectively. To assess whether these gains also hold beyond the mean, we further examine worst-case and tail-sensitive metrics reported in Tables~\ref{tab:neg_transfer_worst}, \ref{tab:neg_transfer_worst_reg}, \ref{tab:neg_transfer_p10}, and \ref{tab:p90_delta_reg} in Appendices.

\paragraph{Classification.}
For classification, \ours delivers the strongest overall robustness profile against negative transfer across four datasets and multiple heterogeneity levels ($\alpha$). On the main metric Avg $\Delta$ in Table~\ref{tab:neg_transfer_main}, \ours is best in all four datasets at $\alpha=0.1$, remains best on three datasets at $\alpha=0.3$, and stays competitive at $\alpha=0.5$, where the gap between methods naturally shrinks as client distributions become less skewed. This pattern indicates that reliability-aware distillation is most beneficial when transferred knowledge is least trustworthy, particularly under strong data heterogeneity.

A stronger robustness signal appears when evaluation moves beyond averages to the vulnerable-client regime. In Table~\ref{tab:neg_transfer_worst}, \ours consistently achieves the best Worst $\Delta$ across all datasets and all heterogeneity levels, showing that it most effectively limits the most severe degradation experienced by any client. Likewise, Table~\ref{tab:neg_transfer_p10} shows that \ours also dominates P10 $\Delta$, which captures the lower tail of the client distribution rather than only the single worst case. Together, these two results indicate that the gains of \ours are not driven by a few easy clients or by averaging effects. Instead, the method improves robustness throughout the lower end of the client population.
%In Table~\ref{tab:rhr_main}, \ours attains the highest Avg RHR across all datasets and all heterogeneity levels. More notably, Table~\ref{tab:worst_rhr_main} shows that \ours also achieves the highest Worst RHR throughout, indicating the most favorable relative harm profile among the compared methods, even for the most negatively affected clients. These appendix results are important because they normalize improvement relative to the severity of the original degradation. 
%They therefore confirm that \ours is not only better in absolute terms, but is also the most effective at suppressing negative transfer relative to the baseline damage.
These results show that \ours not only improves average-case performance but also provides consistent robustness gains across the lower tail of the client population.

%A more nuanced comparison emerges against FedType-based variants. Under mild heterogeneity, FedAvg+FedType and FedProx+FedType can become competitive on some average-case entries in Table~\ref{tab:neg_transfer_main}, and in a few cells they slightly edge out \ours. However, this advantage does not survive the more stringent robustness metrics. On Worst $\Delta$, P10 $\Delta$, Avg RHR, and Worst RHR, \ours remains consistently stronger. This distinction is important: in FL, a method that is competitive on the mean but unstable in the tail is often much less desirable than one that offers uniform protection across clients. The evidence across Tables~\ref{tab:neg_transfer_worst}, \ref{tab:neg_transfer_p10}, \ref{tab:rhr_main}, and \ref{tab:worst_rhr_main} shows that \ours achieves precisely this stronger form of robustness.

\paragraph{Regression.}
On the primary metric Avg $\Delta$ in Table~\ref{tab:neg_transfer_regression}, \ours is best in every setting across both RetinaMNIST and Diabetic Retinopathy and across all heterogeneity levels. Standard FL baselines remain consistently harmful, while \ours is the only method that reliably drives average degradation toward zero or below. 

This advantage becomes more pronounced under stronger robustness metrics. Table~\ref{tab:neg_transfer_worst_reg} shows that \ours achieves the lowest Worst $\Delta$ in every setting, substantially reducing the severity of the most extreme degradation. Similarly, Table~\ref{tab:p90_delta_reg} shows the same trend for P90 $\Delta$, confirming that the benefit extends to the upper tail of the error distribution and is not confined to a single pathological client. In other words, \ours improves both the center and the tail of the regression negative-transfer distribution.

%These results are particularly significant in regression, where harmful transfer can propagate continuously and accumulate without the implicit regularization induced by discrete class structure. The fact that \ours is uniformly best on Avg $\Delta$, Worst $\Delta$, and P90 $\Delta$ suggests that its sample-wise trust mechanism is especially well matched to continuous-output settings, where it can effectively attenuate misleading proxy signals before they distort the private model.

%Overall, the evidence across the main text and appendix supports a consistent conclusion. The key benefit of \ours is not simply a marginal gain in average performance, but a broad improvement in robustness to harmful knowledge transfer. This is most visible under strong heterogeneity, where naively shared information is least reliable, but it remains evident even as the data distribution becomes more homogeneous. By estimating reliability at the sample level and modulating backward distillation accordingly, \ours preserves the benefit of collaboration while substantially reducing the risk that a subset of clients is harmed by the transfer process.

\subsection{Predictive Performance under Negative Transfer Mitigation}

\begin{figure*}[t]
\centering
    \includegraphics[width=0.95\textwidth]{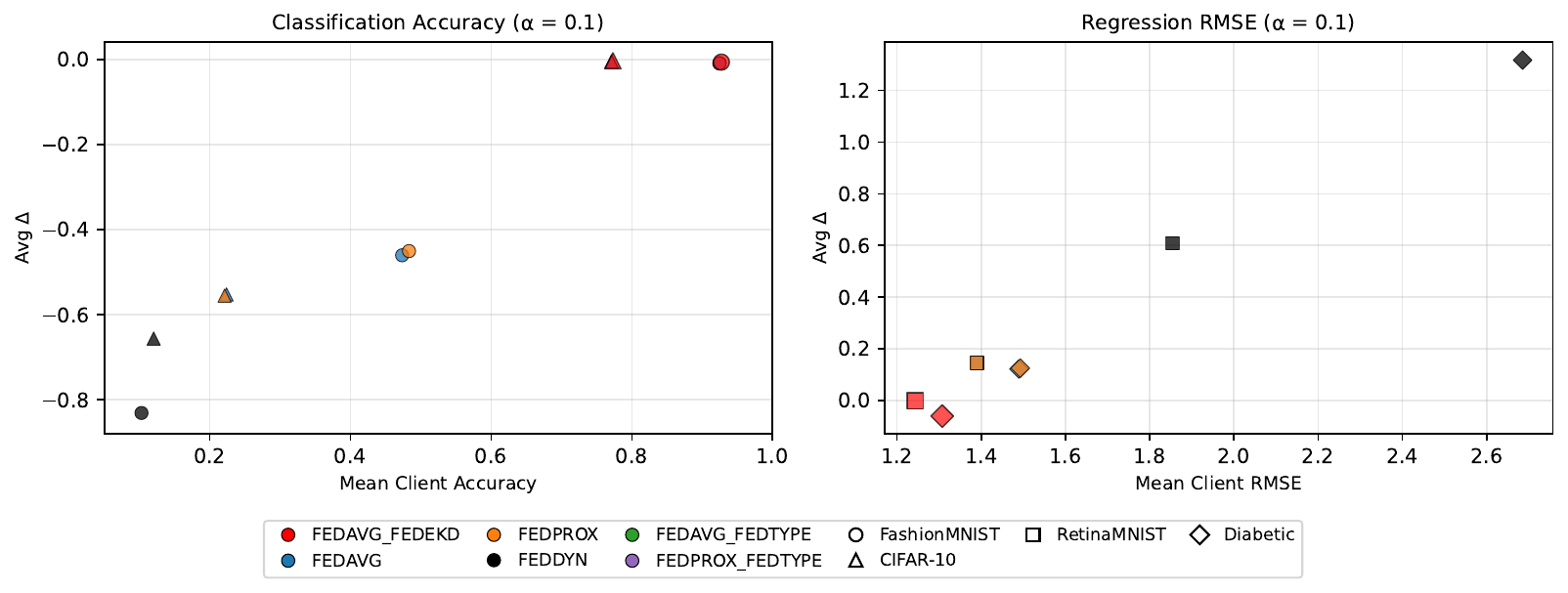}
    \caption{Classification accuracy on FashionMNIST and CIFAR-10 (left) as well as RMSE regression on RetinaMNIST and Diabetic Retinopathy (right) at $\alpha = 0.1$. Full results are reported in Appendix~\ref{sec:acc_n_rmse}.}
    \label{fig:figure2}
\end{figure*}

We further examine whether \ours maintains strong predictive performance in terms of mean client accuracy and RMSE. While these metrics are not the primary optimization objective, they provide an important complementary view of overall model quality.

\paragraph{Classification.}
As shown in Figure~\ref{fig:figure2} (left) and Table~\ref{tab:mean_client_acc_main} in Appendix~\ref{sec:acc_n_rmse}, \ours remains consistently competitive in mean client accuracy across all datasets and heterogeneity levels. In highly heterogeneous settings ($\alpha = 0.1$), \ours achieves the best accuracy on all four datasets, indicating that reliability-aware knowledge transfer does not compromise predictive performance even under severe distribution shifts. At moderate heterogeneity ($\alpha = 0.3$), \ours continues to rank among the top methods, often matching or slightly exceeding FedType-based variants. As data distributions become more homogeneous ($\alpha = 0.5$), the performance gap between methods naturally narrows, yet \ours still maintains near-optimal accuracy across all datasets. Importantly, in the few cases where \ours is not the top-performing method in accuracy, the difference is marginal. This behavior aligns with the design objective of the method: rather than maximizing accuracy alone, \ours prioritizes robustness against negative transfer while preserving high overall performance. The results therefore demonstrate that strong reductions in negative transfer can be achieved without sacrificing predictive accuracy.

\paragraph{Regression.}
A similar pattern is observed in regression tasks. As reported in Figure~\ref{fig:figure2} (right) and Table~\ref{tab:mean_rmse_main}, \ours consistently achieves the lowest mean client RMSE across both RetinaMNIST and Diabetic Retinopathy under all heterogeneity levels. This result is particularly notable because standard FL baselines not only suffer from significant negative transfer (Table~\ref{tab:neg_transfer_regression}) but also exhibit worse predictive performance in absolute terms.

\section{Ablation Studies}
\paragraph{Ablation Analysis of the Gating Mechanism.}

The central methodological component of \ours is the reliability-aware backward gate, which determines how strongly each transferred proxy signal should affect the private model. To isolate the contribution of this gating mechanism, we compare our KD gating with \textit{No Gating} as well as several energy-based gating variants (\textit{entropy-based} energy, \textit{margin-based} energy, \textit{LogSumExp} energy, and \textit{feature-distance} energy) under the same forward-stage setup. All descriptions of these variants can be found in Appendix~\ref{sec:role_en_gating}.
%We first consider \textit{No Gating}, which applies the same backward distillation objective but without any gating, i.e., all samples fully trust the proxy model. 
Table~\ref{tab:ablation_energy_cls} shows \textit{No Gating} consistently suffers from substantially larger negative transfer across all datasets and heterogeneity levels. This confirms that simply introducing backward distillation is insufficient, particularly under high data heterogeneity across clients, where naive full-trust transfer amplifies unreliable and conflicting knowledge. These results demonstrate that effective backward distillation in heterogeneous settings requires a reliability-aware gating mechanism. While some variants, particularly feature-distance energy, achieve competitive performance in certain cases, none consistently match the KL-based gating used in \ours. In particular, alternative energies often exhibit higher variance or degrade under strong heterogeneity, indicating less reliable selection of informative samples.
%Overall, \ours significantly reduces negative transfer, especially under severe heterogeneity ($\alpha=0.1$) and moderate heterogeneity ($\alpha=0.3$), demonstrating the effectiveness of selectively filtering transferred knowledge. This clearly shows that the improvement is not due to the presence of backward distillation alone, but specifically due to reliability-aware gating.

\paragraph{Sensitivity to $\beta$.}

\begin{figure}[t]
    \centering
    \includegraphics[width=0.7\textwidth]{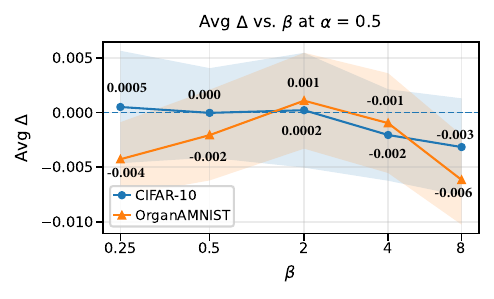}
    \caption{Avg $\Delta$ across $\beta$ values on CIFAR-10 and OrganAMNIST at $\alpha = 0.5$. In practice, $\beta \in [1,2]$ provides a reliable default choice under this setting.}
    \label{fig:figure3}
\end{figure}

We further investigate the impact of the gating sharpness parameter $\beta$, which controls how aggressively unreliable knowledge is suppressed during backward distillation. Figure~\ref{fig:figure3} reports Avg $\Delta$ across $\beta \in \{0.25, 0.5, 2, 4, 8\}$ under mild heterogeneity ($\alpha = 0.5$). 
Across both CIFAR-10 and OrganAMNIST, performance follows a consistent pattern: moderate values of $\beta$ perform best, with $\beta = 2$ achieving the highest Avg $\Delta$. Smaller values (e.g., $\beta = 0.25$) apply nearly uniform weights, resulting in weak gating and limited improvement over naive distillation, while larger values (e.g., $\beta = 4, 8$) lead to overly selective gating that suppresses useful knowledge and degrades performance. 
Interestingly, the optimal $\beta$ remains above the default ($\beta = 1$) even under mild heterogeneity. This suggests that unreliable knowledge persists beyond data heterogeneity and also arises from model asymmetry and proxy aggregation, necessitating non-trivial gating even when client distributions are relatively homogeneous.

%This behavior is further reflected in the average KD weight, which increases monotonically with $\beta$. While higher $\beta$ improves selectivity, it also reduces the effective contribution of the proxy model, eventually approaching under-utilization of shared knowledge. Conversely, very small $\beta$ values fail to sufficiently differentiate between reliable and unreliable signals, diminishing the benefit of reliability-aware gating.

%β→0: FEDEKD tiến gần KD không gating
%β vừa phải: negative transfer giảm mạnh nhất
%β quá lớn: có thể under-distill, accuracy tụt hoặc variance tăng

\paragraph{Sensitivity to $\lambda_{\text{kd}}$.}

We vary $\lambda_{\text{kd}} \in \{0, 0.25, 0.5, 2, 4\}$ while fixing other hyperparameters to examine its impact on negative transfer under mild heterogeneity ($\alpha = 0.5$). As shown in Figure~\ref{fig:figure4} of Appendix~\ref{sec:sensitive_2_lambda}, performance exhibits a clear unimodal pattern, with moderate values $\lambda_{\text{kd}} \in [0.25, 0.5]$ achieving the highest Avg $\Delta$, while larger values lead to substantial degradation. This reflects a trade-off between utilizing proxy knowledge and preserving local learning: small $\lambda_{\text{kd}}$ under-utilizes shared knowledge, whereas large $\lambda_{\text{kd}}$ amplifies the distillation term ($\lambda_{\text{kd}} \cdot w(x)$), effectively overriding the gating mechanism and reintroducing negative transfer. In practice, while the default setting ($\lambda_{\text{kd}} = 1$) provides a stable and reliable choice, we recommend reducing $\lambda_{\text{kd}}$ to $0.5$ or $0.25$ under less severe heterogeneity, as moderate distillation is sufficient and avoids over-amplifying proxy-induced discrepancies.
%We further investigate the effect of the distillation strength parameter $\lambda_{\text{kd}}$, which controls the overall influence of transferred knowledge during backward distillation. We vary $\lambda_{\text{kd}} \in \{0, 0.25, 0.5, 2, 4\}$ while fixing other hyperparameters. Across all settings, performance follows a consistent pattern: moderate values of $\lambda_{\text{kd}}$ yield the best results, with $\lambda_{\text{kd}} = 1$ achieving the strongest reduction in negative transfer. When $\lambda_{\text{kd}}$ is too small, the influence of knowledge transfer becomes negligible, resulting in behavior close to local training. Conversely, large values of $\lambda_{\text{kd}}$ overly amplify transferred knowledge, leading to performance degradation. These results highlight that $\lambda_{\text{kd}}$ controls the global strength of knowledge transfer, while the gating parameter $\beta$ determines its sample-wise selectivity. Together, they provide complementary mechanisms for balancing trust and robustness in heterogeneous FL settings.

\section{Conclusion} \label{sec:diss_n_con}

\paragraph{Limitations.}
Our study has several limitations. First, the experiments are conducted on public benchmarks with simulated cross-silo partitions, which may not fully capture real-world multi-site variation such as differences in data acquisition, population characteristics, annotation quality, and site-specific workflows. Second, while \ours improves client-level robustness, we do not directly evaluate fairness across subpopulations; thus, our results should be interpreted as improving robustness under heterogeneity rather than addressing broader fairness concerns. Third, \ours requires maintaining both private and proxy models and performing bidirectional distillation, which introduces additional computation and synchronization overhead. Finally, the regression energy measures prediction disagreement rather than calibrated predictive uncertainty.

\paragraph{Conclusion and Future Work.}
We introduced \ours, a reliability-aware federated knowledge distillation framework for heterogeneous FL. By converting private-proxy disagreement into sample-wise trust weights, \ours mitigates negative transfer while maintaining strong predictive performance across classification and regression tasks. Future work will extend \ours toward real multi-site deployments, fairness-aware evaluation, adaptive and calibration-aware trust estimation, and more communication-efficient training.

\newpage
\begin{ack}
...
\end{ack}

%\section*{References}
\bibliography{ref}
\bibliographystyle{abbrvnat}

%References follow the acknowledgments in the camera-ready paper. Use unnumbered first-level heading for
%the references. Any choice of citation style is acceptable as long as you are
%consistent. It is permissible to reduce the font size to \verb+small+ (9 point)
%when listing the references.
%Note that the Reference section does not count towards the page limit.
%\medskip

%{
%\small

%[1] Alexander, J.A.\ \& Mozer, M.C.\ (1995) Template-based algorithms for
%connectionist rule extraction. In G.\ Tesauro, D.S.\ Touretzky and T.K.\ Leen
%(eds.), {\it Advances in Neural Information Processing Systems 7},
%pp.\ 609--616. Cambridge, MA: MIT Press.

%[2] Bower, J.M.\ \& Beeman, D.\ (1995) {\it The Book of GENESIS: Exploring
 % Realistic Neural Models with the GEneral NEural SImulation System.}  New York:
%TELOS/Springer--Verlag.

%[3] Hasselmo, M.E., Schnell, E.\ \& Barkai, E.\ (1995) Dynamics of learning and
%recall at excitatory recurrent synapses and cholinergic modulation in rat
%hippocampal region CA3. {\it Journal of Neuroscience} {\bf 15}(7):5249-5262.
%}

%%%%%%%%%%%%%%%%%%%%%%%%%%%%%%%%%%%%%%%%%%%%%%%%%%%%%%%%%%%%
\newpage
\appendix

% start local toc
\startcontents[appendix]

\section*{Contents}
\printcontents[appendix]{}{1}{}
\clearpage

\section{Broader Impact} \label{sec:broader_impact}

\ours introduces a reliability-aware federated knowledge distillation framework that offers a promising approach for advancing healthcare AI under data heterogeneity and privacy constraints. By enabling collaborative learning without requiring direct data sharing, the framework supports compliance with strict data protection regulations while preserving patient confidentiality. At the same time, \ours improves the robustness and stability of models trained across diverse clinical sites, allowing them to better generalize across variations in data acquisition, population characteristics, and annotation quality. By leveraging disagreement between private and proxy models to guide knowledge transfer, \ours enhances the effective use of distributed and heterogeneous medical data without centralization. This capability can expand the applicability of AI systems in healthcare, supporting more reliable diagnostic models and decision support tools across institutions with varying resources. Furthermore, the framework promotes broader participation in collaborative learning settings, enabling institutions with limited data or computational capacity to benefit from shared model improvements. Overall, \ours has the potential to contribute to more scalable, privacy-preserving, and robust healthcare AI systems, supporting improved clinical decision-making and advancing the development of trustworthy machine learning solutions in sensitive real-world environments.

\section{Compute and Environment Configuration} \label{sec:compute_env}

All experiments are conducted on servers equipped with NVIDIA A100 GPUs (80 GB) and AMD EPYC CPUs, running on Linux-based systems with CUDA 11.8. Models are implemented in PyTorch and trained using single-GPU setups with up to 32 CPU cores per run. Additional implementation details, including hyperparameters and training configurations, are provided in the accompanying code repository.

\section{Code and Repository} \label{sec:code}

We provide an implementation of \ours at \textcolor{blue}{\textbf{link}}. This repository includes resources for data preprocessing scripts, baseline methods, and experiment configurations. The codebase is designed to facilitate reproducibility and further research in heterogeneous federated learning. All relevant details and instructions are documented in the repository.

\section{Author Statement}

The authors take full responsibility for the content of this work, including the implementation and evaluation of the proposed method. Any potential issues related to data usage, licensing, or reproducibility will be addressed through updates to the public repository.

\newpage
\section{Algorithm of \ours}

\begin{algorithm}[H]
\caption{One communication round of \ours}
\label{alg:fedekd}
\begin{algorithmic}[1]
\Require Client datasets $\{\mathcal{D}_k\}_{k=1}^K$, private models $\{f_k\}_{k=1}^K$, proxy models $\{g_k\}_{k=1}^K$, $\lambda_{\mathrm{kd}}$, $\beta$
\For{each client $k$ in parallel}
    \State Freeze private model $f_k$
    \State Update proxy model $g_k$ on $\mathcal{D}_k$ by forward KD from $f_k$
    \State Upload proxy model $g_k$ to the server
\EndFor
\State Server aggregates proxy models: $g \leftarrow K^{-1}\sum_{k=1}^{K} g_k$
\State Server broadcasts global proxy $g$ to all clients
\For{each client $k$ in parallel}
    \State Freeze global proxy $g$
    \For{each minibatch $B\subset \mathcal{D}_k$}
        \State Compute private outputs $f_k(x_i)$ and proxy outputs $g(x_i)$ for $x_i\in B$
        \State Compute energy scores $E_i$ and trust weights $w_i$ using Eq.~\ref{eq:trust_weight}
        \State Detach $w_i$ from the computation graph
        \State Update private model $f_k$ using
        $\mathcal{L}_{\mathrm{sup}}+\lambda_{\mathrm{kd}}|B|^{-1}\sum_{x_i\in B}\operatorname{sg}(w_i)\ell_{\mathrm{KD}}(x_i)$
    \EndFor
\EndFor
\end{algorithmic}
\end{algorithm}

\paragraph{Computational and communication cost.}
Let $F_f$ and $B_f$ denote the forward and backward costs of the private model, and let $F_g$ and $B_g$ denote those of the proxy model. For each minibatch, forward proxy distillation costs one frozen private forward pass and one proxy forward-backward update, i.e., $F_f+F_g+B_g$. The backward stage costs one frozen proxy forward pass and one private forward-backward update, i.e., $F_g+F_f+B_f$, plus the energy and gating computation, which is $O(|B|C)$ for classification with $C$ classes and $O(|B|)$ for regression. Therefore, compared with a standard supervised private update, \ours adds one private teacher forward pass, one proxy teacher forward pass, and one lightweight proxy update per communication round. The communication cost is $O(|\theta_g|)$ per client per round because only proxy parameters are uploaded and broadcast, while private parameters $\theta_f$ remain local.

\section{Mechanistic and Stability Properties of Energy-Gated Distillation}
\label{sec:gate_theory}

The following results are intended as stability and mechanistic properties of the batch-normalized logistic gate, rather than as new mathematical inequalities. They clarify how the gate orders samples, bounds trust weights, and modulates per-sample distillation gradients under stop-gradient weighting.

\begin{proposition}[Batch-relative monotonicity]
\label{prop:batch_relative_monotonicity}
Let $B=\{x_i\}_{i=1}^{n}$ be a minibatch with energy scores
$E_i=E(x_i)$. Define the batch mean and standard deviation as
\[
\mu_B=\frac{1}{n}\sum_{j=1}^{n}E_j,
\qquad
s_B=\sqrt{\frac{1}{n}\sum_{j=1}^{n}(E_j-\mu_B)^2}.
\]
For $\epsilon_B>0$ and $\beta>0$, define the normalized energy and trust
weight by
\[
\widetilde{E}_i=\frac{E_i-\mu_B}{s_B+\epsilon_B},
\qquad
w_i=\rho(-\beta\widetilde{E}_i),
\qquad
\rho(t)=\frac{1}{1+\exp(-t)}.
\]
Then for any two samples $x_i,x_j\in B$,
\[
E_i\leq E_j
\quad\Longrightarrow\quad
w_i\geq w_j.
\]
\end{proposition}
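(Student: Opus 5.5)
The plan is to write $w_i$ as a composition of monotone maps and track how each one preserves or reverses order. The key observation is that the normalization constants $\mu_B$ and $s_B$ are shared by every sample in the batch. Once the batch $B$ is fixed, $\mu_B$ and $s_B$ are fixed real numbers, independent of the index $i$. Hence the map
\[
\phi_B(e)=\frac{e-\mu_B}{s_B+\epsilon_B}
\]
is a single affine function applied to every $E_i$, with $\widetilde{E}_i=\phi_B(E_i)$.

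First, I will check that $\phi_B$ is nondecreasing. Since $s_B\ge 0$ (it is a square root of a nonnegative average) and $\epsilon_B>0$, the denominator satisfies $s_B+\epsilon_B>0$. So $\phi_B$ has strictly positive slope, and $E_i\le E_j$ gives $\widetilde{E}_i\le\widetilde{E}_j$. The role of $\epsilon_B$ matters here. It guarantees the denominator is nonzero even when all energies coincide ($s_B=0$); in that degenerate case every $\widetilde{E}_i=0$ and all weights equal $1/2$, which is consistent with the claim.

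Second, multiplying by $-\beta<0$ reverses the order: $-\beta\widetilde{E}_i\ge-\beta\widetilde{E}_j$.

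Third, I will verify that the logistic function $\rho$ is strictly increasing on $\mathbb{R}$. Either its derivative $\rho'(t)=\rho(t)(1-\rho(t))>0$ shows this, or directly: $t\mapsto e^{-t}$ is decreasing, so $1+e^{-t}$ is decreasing and positive, and its reciprocal is increasing. Applying $\rho$ then yields $w_i=\rho(-\beta\widetilde{E}_i)\ge\rho(-\beta\widetilde{E}_j)=w_j$.

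No step is a genuine obstacle. The only point needing care is the first one: making explicit that $\mu_B$ and $s_B$ are common to $i$ and $j$, so comparing $\widetilde{E}_i$ with $\widetilde{E}_j$ is a comparison under one fixed affine map, and that the denominator is strictly positive. I would also note, as a byproduct, that strict inequality $E_i<E_j$ gives $w_i>w_j$, since every map in the composition is strictly monotone.
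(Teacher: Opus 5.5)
Your proposal is correct and matches the paper's proof essentially step for step: the strictly positive denominator $s_B+\epsilon_B$ makes the shared affine normalization order-preserving, multiplication by $-\beta$ reverses the order, and the increasing sigmoid then gives $w_i\ge w_j$. Your remarks on the degenerate case $s_B=0$ and on strict monotonicity are correct additions, though the stated claim does not need them.
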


\begin{proof}
Since $s_B+\epsilon_B>0$, the batch normalization is order-preserving:
\[
E_i\leq E_j
\quad\Longrightarrow\quad
\frac{E_i-\mu_B}{s_B+\epsilon_B}
\leq
\frac{E_j-\mu_B}{s_B+\epsilon_B}.
\]
Therefore,
\[
\widetilde{E}_i\leq \widetilde{E}_j.
\]
The map $z\mapsto -\beta z$ is monotonically decreasing for $\beta>0$, and
the sigmoid function $\rho(t)$ is monotonically increasing. Hence the
composition $z\mapsto \rho(-\beta z)$ is monotonically decreasing. Thus,
\[
\widetilde{E}_i\leq \widetilde{E}_j
\quad\Longrightarrow\quad
\rho(-\beta\widetilde{E}_i)
\geq
\rho(-\beta\widetilde{E}_j),
\]
which gives
\[
w_i\geq w_j.
\]
\end{proof}

\begin{proposition}[Bounded and non-degenerate trust weights]
\label{prop:nondegenerate_trust_weights}
Let $B=\{x_i\}_{i=1}^{n}$ be a minibatch with energy scores
$E_i=E(x_i)$. Define
\[
\mu_B=\frac{1}{n}\sum_{j=1}^{n}E_j,
\qquad
s_B=\sqrt{\frac{1}{n}\sum_{j=1}^{n}(E_j-\mu_B)^2},
\]
and
\[
\widetilde{E}_i=\frac{E_i-\mu_B}{s_B+\epsilon_B},
\qquad
w_i=\rho(-\beta\widetilde{E}_i),
\qquad
\rho(t)=\frac{1}{1+\exp(-t)},
\]
where $\epsilon_B>0$ and $\beta>0$. If $n=1$, then $w_1=1/2$.
If $n\geq 2$, then for every sample $x_i\in B$,
\[
\rho\!\left(-\beta\sqrt{n-1}\right)
\leq
w_i
\leq
\rho\!\left(\beta\sqrt{n-1}\right).
\]
In particular, $0<w_i<1$ for all samples.
\end{proposition}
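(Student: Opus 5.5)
The bound follows from a uniform control of the normalized energies, $|\widetilde{E}_i|\le\sqrt{n-1}$, transferred through the monotone logistic map as in the proof of Proposition~\ref{prop:batch_relative_monotonicity}. I will prove that bound first and then apply the sigmoid.

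For the case $n=1$, we have $\mu_B=E_1$, so $E_1-\mu_B=0$ and $s_B=0$. Hence $\widetilde{E}_1=0/\epsilon_B=0$, which gives $w_1=\rho(0)=1/2$.

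For $n\ge 2$, fix $i$ and write $d_j=E_j-\mu_B$. These deviations satisfy $\sum_j d_j=0$ and $\sum_j d_j^2=n s_B^2$.

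\begin{itemize}
\item The centering constraint gives $d_i=-\sum_{j\ne i}d_j$.
\item Cauchy--Schwarz on the remaining $n-1$ terms gives $d_i^2\le (n-1)\sum_{j\ne i}d_j^2=(n-1)\bigl(n s_B^2-d_i^2\bigr)$.
\item Rearranging yields $n d_i^2\le n(n-1)s_B^2$, that is, $|d_i|\le\sqrt{n-1}\,s_B$. This is the classical Samuelson-type bound.
\item Since $\epsilon_B>0$, we get $|\widetilde{E}_i|=|d_i|/(s_B+\epsilon_B)\le \sqrt{n-1}\,s_B/(s_B+\epsilon_B)\le\sqrt{n-1}$.
\item This also covers the degenerate batch $s_B=0$: there all $d_i=0$, so $\widetilde{E}_i=0$.
\end{itemize}

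Because $\beta>0$ and $\rho$ is increasing, applying $z\mapsto\rho(-\beta z)$ to $-\sqrt{n-1}\le\widetilde{E}_i\le\sqrt{n-1}$ gives $\rho(-\beta\sqrt{n-1})\le w_i\le\rho(\beta\sqrt{n-1})$. Finally, $\rho(t)\in(0,1)$ for every finite $t$, so $0<w_i<1$ in both cases.

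The main obstacle is the Samuelson step, which bounds a single deviation by the standard deviation using the population normalization $1/n$. The constant $\sqrt{n-1}$ depends on that normalization. If the Cauchy--Schwarz rearrangement gave trouble, I would instead maximize $d_i$ subject to $\sum_j d_j=0$ and a fixed $\sum_j d_j^2$. The extremum occurs when all other deviations are equal to $-d_i/(n-1)$, which gives the same constant.
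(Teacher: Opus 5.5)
Your proof is correct and follows essentially the same route as the paper: a Samuelson-type bound $|E_i-\mu_B|\le\sqrt{n-1}\,s_B$ from Cauchy--Schwarz on the other $n-1$ centered deviations, then the factor $s_B/(s_B+\epsilon_B)\le 1$, then the decreasing map $z\mapsto\rho(-\beta z)$. The only cosmetic difference is that you use the raw deviations $d_j$ rather than the paper's standardized $u_j=(E_j-\mu_B)/s_B$, so the degenerate case $s_B=0$ is absorbed into the main argument instead of being handled separately.
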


\begin{proof}
If $n=1$, then $E_1=\mu_B$ and $s_B=0$, so
$\widetilde{E}_1=0$ and $w_1=\rho(0)=1/2$.

Now assume $n\geq 2$. If $s_B=0$, then all energy scores are equal,
so $\widetilde{E}_i=0$ and $w_i=1/2$ for all $i$, which satisfies the
claim.

It remains to consider the case $s_B>0$. Define
\[
u_i=\frac{E_i-\mu_B}{s_B}.
\]
Then
\[
\sum_{i=1}^{n}u_i=0,
\qquad
\sum_{i=1}^{n}u_i^2=n.
\]
For any fixed $i$, we have
\[
\sum_{j\neq i}u_j=-u_i.
\]
By Cauchy's inequality,
\[
\left(\sum_{j\neq i}u_j\right)^2
\leq
(n-1)\sum_{j\neq i}u_j^2.
\]
Thus,
\[
\sum_{j\neq i}u_j^2
\geq
\frac{u_i^2}{n-1}.
\]
Therefore,
\[
n
=
u_i^2+\sum_{j\neq i}u_j^2
\geq
u_i^2+\frac{u_i^2}{n-1}
=
\frac{n}{n-1}u_i^2.
\]
Hence,
\[
|u_i|\leq \sqrt{n-1}.
\]
Since
\[
|\widetilde{E}_i|
=
\left|
\frac{E_i-\mu_B}{s_B+\epsilon_B}
\right|
=
|u_i|\frac{s_B}{s_B+\epsilon_B}
\leq
|u_i|,
\]
we obtain
\[
|\widetilde{E}_i|\leq \sqrt{n-1}.
\]
Thus,
\[
-\sqrt{n-1}
\leq
\widetilde{E}_i
\leq
\sqrt{n-1}.
\]
Because $z\mapsto \rho(-\beta z)$ is monotonically decreasing for
$\beta>0$,
\[
\rho\!\left(-\beta\sqrt{n-1}\right)
\leq
w_i
\leq
\rho\!\left(\beta\sqrt{n-1}\right).
\]
Finally, since the sigmoid function takes values strictly between $0$
and $1$, we have $0<w_i<1$.
\end{proof}

\begin{proposition}[Variational interpretation of the logistic gate]
\label{prop:logistic_gate_variational}
For any normalized energy $\widetilde{E}\in\mathbb{R}$ and any
$\beta>0$, the trust weight
\[
w^\star=\rho(-\beta\widetilde{E})
=
\frac{1}{1+\exp(\beta\widetilde{E})}
\]
is the unique minimizer over $w\in(0,1)$ of
\[
\phi(w)
=
w\widetilde{E}
+
\frac{1}{\beta}
\left[
w\log w+(1-w)\log(1-w)
\right].
\]
\end{proposition}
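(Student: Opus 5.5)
The plan is a direct one-dimensional convex-analysis argument on the open interval $(0,1)$: establish strict convexity of $\phi$, locate its unique stationary point, and identify that point with $\rho(-\beta\widetilde{E})$.

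First, I would note that $\phi$ is twice differentiable on $(0,1)$. Differentiating the linear term gives $\widetilde{E}$. For the negative binary entropy term, $\frac{d}{dw}[w\log w+(1-w)\log(1-w)]=\log w-\log(1-w)$. Hence
\[
\phi'(w)=\widetilde{E}+\frac{1}{\beta}\log\frac{w}{1-w},
\qquad
\phi''(w)=\frac{1}{\beta}\left(\frac{1}{w}+\frac{1}{1-w}\right)=\frac{1}{\beta\,w(1-w)}>0 .
\]
Since $\beta>0$, $\phi$ is strictly convex on $(0,1)$. Therefore it has at most one minimizer, and any stationary point is the global minimizer.

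Next I would solve $\phi'(w)=0$. This gives $\log\frac{w}{1-w}=-\beta\widetilde{E}$, so $\frac{w}{1-w}=\exp(-\beta\widetilde{E})$. Solving for $w$ yields
\[
w=\frac{\exp(-\beta\widetilde{E})}{1+\exp(-\beta\widetilde{E})}=\frac{1}{1+\exp(\beta\widetilde{E})}=\rho(-\beta\widetilde{E})=w^\star,
\]
which lies strictly in $(0,1)$ for every real $\widetilde{E}$. As an alternative check of existence, $\phi'$ is a continuous strictly increasing bijection from $(0,1)$ onto $\mathbb{R}$: the logit tends to $-\infty$ as $w\downarrow0$ and to $+\infty$ as $w\uparrow1$. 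So exactly one zero exists, and $\phi'<0$ to its left and $\phi'>0$ to its right. This sign pattern directly gives that $w^\star$ is the unique minimizer, without appealing to any further convexity theory.

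There is no real obstacle. The only point needing care is that the domain is open, so a minimizer need not exist a priori. The sign analysis of $\phi'$ settles this: $\phi$ is strictly decreasing on $(0,w^\star]$ and strictly increasing on $[w^\star,1)$, so $\phi(w)>\phi(w^\star)$ for all $w\neq w^\star$. Optionally, I would remark that $\phi$ extends continuously to $[0,1]$ with values $0$ and $\widetilde{E}$ at the endpoints, which the interior minimum strictly undercuts.
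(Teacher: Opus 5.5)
Your proposal is correct and follows essentially the same route as the paper: compute $\phi'$ and $\phi''=1/(\beta w(1-w))>0$, invoke strict convexity, and solve $\phi'(w)=0$ to obtain $w=\rho(-\beta\widetilde{E})$. Your extra sign analysis of $\phi'$ (a strictly increasing bijection from $(0,1)$ onto $\mathbb{R}$) makes explicit why a minimizer exists on the open interval, which the paper leaves implicit in its convexity argument.
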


\begin{proof}
For $w\in(0,1)$,
\[
\phi'(w)
=
\widetilde{E}
+
\frac{1}{\beta}
\log\frac{w}{1-w},
\]
and
\[
\phi''(w)
=
\frac{1}{\beta w(1-w)}
>0.
\]
Therefore, $\phi$ is strictly convex on $(0,1)$ and has at most one
stationary point. Setting $\phi'(w)=0$ gives
\[
\log\frac{w}{1-w}
=
-\beta\widetilde{E}.
\]
Exponentiating both sides,
\[
\frac{w}{1-w}
=
\exp(-\beta\widetilde{E}).
\]
Solving for $w$ yields
\[
w
=
\frac{1}{1+\exp(\beta\widetilde{E})}
=
\rho(-\beta\widetilde{E}).
\]
Since $\phi$ is strictly convex, this stationary point is the unique
minimizer.
\end{proof}

\begin{proposition}[Detached gating preserves the per-sample distillation direction]
\label{prop:detached_gate_direction}
Let $\theta$ denote the parameters of the private model and let
$\ell_i(\theta)$ be the distillation loss for sample $x_i$. Consider the
detached gated loss
\[
L_i(\theta)
=
\operatorname{sg}(w_i)\ell_i(\theta),
\]
where $\operatorname{sg}(\cdot)$ denotes stop-gradient and $0<w_i<1$.
Then
\[
\nabla_\theta L_i(\theta)
=
w_i\nabla_\theta\ell_i(\theta).
\]
Consequently, for each individual sample, detached gating rescales the
distillation gradient by a positive scalar and cannot reverse its
direction.
\end{proposition}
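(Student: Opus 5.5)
The plan is to make the semantics of the stop-gradient operator precise and then apply the product rule. Formally, $\operatorname{sg}$ is the identity in the forward pass, so $L_i(\theta)=w_i\ell_i(\theta)$ numerically. In the backward pass its Jacobian is defined to be zero. Equivalently, once $\theta$ is fixed at the current iterate, we treat $w_i$ as a constant $c_i:=w_i(\theta)$ that does not depend on $\theta$. This is exactly what Algorithm~\ref{alg:fedekd} does when it detaches $w_i$ from the computation graph.

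With that convention, I would write the gradient by the product rule:
\[
\nabla_\theta\bigl(\operatorname{sg}(w_i)\ell_i(\theta)\bigr)=\ell_i(\theta)\,\nabla_\theta\operatorname{sg}(w_i)+\operatorname{sg}(w_i)\,\nabla_\theta\ell_i(\theta).
\]
The first term vanishes by the definition of $\operatorname{sg}$. The remaining factor $\operatorname{sg}(w_i)$ evaluates to $w_i$, which gives $\nabla_\theta L_i=w_i\nabla_\theta\ell_i$.

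For the consequence, I would invoke Proposition~\ref{prop:nondegenerate_trust_weights}, which guarantees $0<w_i<1$; this is also a hypothesis of the statement. Hence $w_i>0$. If $\nabla_\theta\ell_i\neq 0$, then $w_i\nabla_\theta\ell_i$ has the same direction, since cosine similarity equals $1$. Its norm is $w_i\|\nabla_\theta\ell_i\|<\|\nabla_\theta\ell_i\|$. If $\nabla_\theta\ell_i=0$, both gradients are zero, so no reversal can occur. A gradient step therefore moves along the same descent direction for that sample, with the step shrunk by the factor $w_i$.

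The only subtle point is the first step: stating clearly that the claimed identity concerns the detached gradient used in optimization, not the true total derivative of $\theta\mapsto w_i(\theta)\ell_i(\theta)$. The true total derivative would contain the extra term $\ell_i\nabla_\theta w_i$, which depends on the energy through $\mu_B$ and $s_B$ and could have any direction. I would add a remark that this extra term is precisely what the stop-gradient removes. Everything else is routine.
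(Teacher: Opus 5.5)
Your proposal is correct and follows the paper's proof: treat $\operatorname{sg}(w_i)$ as a $\theta$-independent constant so that $\nabla_\theta L_i = w_i\nabla_\theta\ell_i$, then use $w_i>0$ to conclude the direction cannot be reversed. Your additional points are sound refinements of the same argument. These are the vanishing product-rule term, the separate treatment of the zero-gradient case, and the distinction from the true total derivative containing $\ell_i\nabla_\theta w_i$.
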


\begin{proof}
Because gradients are stopped through $w_i$, the trust weight is treated
as a constant with respect to $\theta$ during backpropagation. Therefore,
\[
\nabla_\theta L_i(\theta)
=
\nabla_\theta
\left[
\operatorname{sg}(w_i)\ell_i(\theta)
\right]
=
w_i\nabla_\theta\ell_i(\theta).
\]
Since $0<w_i<1$, the gated per-sample gradient is a positive scalar
multiple of the ungated per-sample gradient. Hence the direction cannot
be reversed for that sample.
\end{proof}

\begin{corollary}[Regression output-level influence]
\label{cor:regression_output_influence}
For regression, let
\[
r_i=f_\theta(x_i)-g(x_i)
\]
and let the distillation loss be
\[
\ell_i=\|r_i\|_2^2.
\]
Under detached gating,
\[
L_i=\operatorname{sg}(w_i)\ell_i.
\]
Then the gradient with respect to the private model output is
\[
\nabla_{f_\theta(x_i)}L_i
=
2w_i r_i.
\]
Therefore, the negative gradient is proportional to
\[
g(x_i)-f_\theta(x_i),
\]
and the gated output-level gradient has no larger magnitude than the
ungated distillation gradient:
\[
\left\|
\nabla_{f_\theta(x_i)}L_i
\right\|_2
\leq
\left\|
\nabla_{f_\theta(x_i)}\ell_i
\right\|_2.
\]
\end{corollary}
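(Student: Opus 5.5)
The plan is to reduce the corollary to Proposition~\ref{prop:detached_gate_direction}, specialized to the model output instead of the parameters $\theta$. Treat $f_\theta(x_i)$ as the variable of differentiation and $g(x_i)$ as a constant, since the global proxy is frozen during the backward stage. Under stop-gradient, $w_i$ is also a constant, exactly as in the proof of Proposition~\ref{prop:detached_gate_direction}. Then
\[
\nabla_{f_\theta(x_i)}L_i = w_i\,\nabla_{f_\theta(x_i)}\|r_i\|_2^2 .
\]

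Next I would compute the ungated output gradient. Since $r_i=f_\theta(x_i)-g(x_i)$ has identity Jacobian with respect to $f_\theta(x_i)$, the gradient of $\|r_i\|_2^2=r_i^\top r_i$ is $2r_i$. This gives
\[
\nabla_{f_\theta(x_i)}L_i = 2w_i r_i .
\]
The negative gradient is therefore
\[
-2w_i r_i = 2w_i\bigl(g(x_i)-f_\theta(x_i)\bigr),
\]
which is a positive multiple of $g(x_i)-f_\theta(x_i)$, because $w_i>0$ by Proposition~\ref{prop:nondegenerate_trust_weights}.

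For the norm bound, I would take norms on both sides:
\[
\|2w_i r_i\|_2 = w_i\,\|2r_i\|_2 = w_i\,\bigl\|\nabla_{f_\theta(x_i)}\ell_i\bigr\|_2 .
\]
The inequality then follows from $w_i<1$, again by Proposition~\ref{prop:nondegenerate_trust_weights}. The inequality is strict whenever $r_i\neq 0$, and both sides vanish otherwise.

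There is no real obstacle here. The only point needing care is to state explicitly that $g$ is frozen, so $r_i$ depends on $f_\theta(x_i)$ only through the identity map. One should also note that the paper's energy uses a factor $\tfrac12$ while $\ell_i$ does not; this affects only $w_i$, not the gradient computation.
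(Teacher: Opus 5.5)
Your proposal is correct and follows essentially the same route as the paper: treat the detached $w_i$ (and $g(x_i)$) as constants, compute $\nabla_{f_\theta(x_i)}\|r_i\|_2^2 = 2r_i$, and then use $0<w_i<1$ for both the direction claim and the norm bound. Your extra remarks are correct but not needed for the statement: the inequality is strict when $r_i\neq 0$, and the $\tfrac12$ in the regression energy affects only $w_i$.
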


\begin{proof}
Since $w_i$ is detached, it is constant with respect to
$f_\theta(x_i)$. Thus,
\[
\nabla_{f_\theta(x_i)}L_i
=
w_i\nabla_{f_\theta(x_i)}
\|f_\theta(x_i)-g(x_i)\|_2^2.
\]
The gradient of the squared error is
\[
\nabla_{f_\theta(x_i)}
\|f_\theta(x_i)-g(x_i)\|_2^2
=
2(f_\theta(x_i)-g(x_i))
=
2r_i.
\]
Hence,
\[
\nabla_{f_\theta(x_i)}L_i=2w_i r_i.
\]
Since $0<w_i<1$,
\[
\left\|
2w_i r_i
\right\|_2
\leq
\left\|
2r_i
\right\|_2,
\]
which proves the magnitude bound. The negative gradient is
\[
-2w_i r_i
=
2w_i(g(x_i)-f_\theta(x_i)),
\]
so the distillation update pulls the private prediction toward the proxy
prediction without reversing the squared-error distillation direction.
\end{proof}

\begin{corollary}[Classification output-level influence]
\label{cor:classification_output_influence}
For classification, let
\[
z_i=f_\theta(x_i),
\qquad
p_i=\mathrm{softmax}(z_i),
\qquad
q_i=\mathrm{softmax}(g(x_i)),
\]
where $q_i$ is treated as fixed during the private-model update. Let the
distillation loss be
\[
\ell_i=\mathrm{KL}(q_i\|p_i).
\]
Under detached gating,
\[
L_i=\operatorname{sg}(w_i)\ell_i.
\]
Then
\[
\nabla_{z_i}L_i
=
w_i(p_i-q_i).
\]
Moreover,
\[
\left\|
\nabla_{z_i}L_i
\right\|_2
\leq
\left\|p_i-q_i\right\|_2
\leq
\sqrt{2}.
\]
\end{corollary}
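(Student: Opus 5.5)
The plan has three steps: compute the gradient of the ungated loss with respect to the logits, pull out the detached weight, and then bound the norm using the simplex structure of $p_i$ and $q_i$.

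\textbf{Gradient of the ungated loss.} Since $q_i$ is fixed, write
\[
\mathrm{KL}(q_i\|p_i)=\sum_c q_{i,c}\log q_{i,c}-\sum_c q_{i,c}\log p_{i,c}.
\]
Only the cross-entropy term depends on $z_i$. Use $\log p_{i,c}=z_{i,c}-\log\sum_{c'}\exp(z_{i,c'})$ and $\sum_c q_{i,c}=1$. This gives
\[
-\sum_c q_{i,c}\log p_{i,c}=-\langle q_i,z_i\rangle+\log\sum_{c'}\exp(z_{i,c'}).
\]
The gradient of the log-sum-exp is $\mathrm{softmax}(z_i)=p_i$. Hence $\nabla_{z_i}\ell_i=p_i-q_i$.

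\textbf{Extracting the weight.} Exactly as in Proposition~\ref{prop:detached_gate_direction} and Corollary~\ref{cor:regression_output_influence}, the stop-gradient makes $w_i$ a constant with respect to $z_i$. Therefore $\nabla_{z_i}L_i=w_i(p_i-q_i)$.

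\textbf{First inequality.} By Proposition~\ref{prop:nondegenerate_trust_weights}, $0<w_i<1$. This immediately gives $\|w_i(p_i-q_i)\|_2\le\|p_i-q_i\|_2$.

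\textbf{Second inequality.} I would avoid a crude bound and use the simplex structure directly. First,
\[
\|p_i-q_i\|_2^2\le\|p_i-q_i\|_1\max_c|p_{i,c}-q_{i,c}|.
\]
Each coordinate satisfies $|p_{i,c}-q_{i,c}|\le1$, and $\|p_i-q_i\|_1\le\|p_i\|_1+\|q_i\|_1=2$. Together these give $\|p_i-q_i\|_2^2\le2$.

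An alternative route is to expand the square:
\[
\|p_i-q_i\|_2^2=\|p_i\|_2^2+\|q_i\|_2^2-2\langle p_i,q_i\rangle\le1+1-0=2.
\]
This uses $\|p\|_2^2\le\|p\|_1=1$ for any probability vector and $\langle p_i,q_i\rangle\ge0$ since both vectors are nonnegative. This second route is cleaner, and I would present it.

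\textbf{Main obstacle.} The only genuine care point is the bookkeeping in the first step. Since $q_i$ is treated as fixed, its entropy term contributes nothing to the gradient, and the normalization $\sum_c q_{i,c}=1$ is exactly what produces $p_i$ from the log-sum-exp. The remaining steps are short.
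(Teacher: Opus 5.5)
Your proposal is correct. The gradient computation, the extraction of the detached weight, and the first inequality match the paper's proof. The paper also drops the entropy of the fixed $q_i$, takes the logit gradient of the cross-entropy to be $p_i-q_i$, treats $w_i$ as a constant, and uses $0<w_i<1$. Your log-sum-exp computation makes explicit what the paper only asserts.

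The one genuinely different step is the bound $\|p_i-q_i\|_2\le\sqrt{2}$.
\begin{itemize}
\item \emph{The paper's route.} It sets $v=p_i-q_i$ and uses $\sum_c v_c=0$ to split $v$ into positive and negative parts of equal $\ell_1$-mass $a\le 1$. It then bounds $\|v\|_2^2=\|v^+\|_2^2+\|v^-\|_2^2\le\|v^+\|_1^2+\|v^-\|_1^2=2a^2$.
\item \emph{Your expansion route.} The bound $\|p_i\|_2^2+\|q_i\|_2^2-2\langle p_i,q_i\rangle\le 2$ is a one-liner. It uses only the nonnegativity and unit mass of each vector separately, never the zero-sum structure of their difference.
\end{itemize}

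What the paper's slightly longer argument buys is a sharper intermediate estimate, $\|p_i-q_i\|_2\le\sqrt{2}\,a$. Here $a=\tfrac12\|p_i-q_i\|_1$ is the total variation distance between the private and proxy predictive distributions. This ties the size of the gated logit gradient to how far apart the two distributions are, which is arguably the more informative statement in the context of the gate.

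Your H\"older variant $\|v\|_2^2\le\|v\|_1\|v\|_\infty$ with $\|v\|_\infty\le 1$ only gives $\|v\|_2\le\sqrt{2a}$, which is weaker than the paper's bound when $0<a<1$. Using $\|v\|_\infty\le a$ instead would recover the paper's $2a^2$. For the corollary as stated, any of these routes suffices.
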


\begin{proof}
The KL loss can be written as
\[
\ell_i
=
\mathrm{KL}(q_i\|p_i)
=
\sum_{c}q_{ic}\log q_{ic}
-
\sum_{c}q_{ic}\log p_{ic}.
\]
The first term is constant with respect to $z_i$. Since
$p_i=\mathrm{softmax}(z_i)$, the gradient of the cross-entropy term with
respect to logits is
\[
\nabla_{z_i}\ell_i
=
p_i-q_i.
\]
Because $w_i$ is detached,
\[
\nabla_{z_i}L_i
=
w_i\nabla_{z_i}\ell_i
=
w_i(p_i-q_i).
\]
Since $0<w_i<1$,
\[
\left\|
\nabla_{z_i}L_i
\right\|_2
=
w_i\|p_i-q_i\|_2
\leq
\|p_i-q_i\|_2.
\]

It remains to show that $\|p_i-q_i\|_2\leq\sqrt{2}$. Let
$v=p_i-q_i$. Since both $p_i$ and $q_i$ are probability vectors,
$\sum_c v_c=0$. Let $v^+_c=\max(v_c,0)$ and
$v^-_c=\max(-v_c,0)$. Then
\[
\|v^+\|_1=\|v^-\|_1=a
\]
for some $a\leq 1$. Therefore,
\[
\|v\|_2^2
=
\|v^+\|_2^2+\|v^-\|_2^2
\leq
\|v^+\|_1^2+\|v^-\|_1^2
=
2a^2
\leq
2.
\]
Thus,
\[
\|p_i-q_i\|_2\leq \sqrt{2}.
\]
\end{proof}

\paragraph{Remark.} When the full objective includes the factor $\lambda_{\mathrm{kd}}$, the
distillation gradients in Corollaries~\ref{cor:regression_output_influence}
and~\ref{cor:classification_output_influence} are scaled by
$\lambda_{\mathrm{kd}}$.

These results do not claim that the gate theoretically guarantees a
reduction in final negative transfer. Rather, they show that the proposed
batch-normalized logistic gate provides a principled relative trust
assignment and that, under stop-gradient gating, it modulates the
per-sample distillation influence without reversing the underlying
distillation direction.

\section{Data Licence \& Data Statistics} \label{app:dat_stats}

\subsection{License and Ethics} 
All datasets used in this work are publicly available and are used in accordance with their respective licenses and terms of use. 
The medical imaging datasets (OCTMNIST, OrganAMNIST, and RetinaMNIST) are derived from MedMNIST (\url{https://medmnist.com/}) and are released under the Creative Commons Attribution (CC BY 4.0) license. 
The Diabetic Retinopathy dataset is obtained via the MedIMeta benchmark (\url{https://www.woerner.eu/projects/medimeta/}), which aggregates multiple medical imaging datasets for research use. 
FashionMNIST is distributed under the MIT License, while CIFAR-10 is released for research purposes. 
All datasets consist of de-identified images and annotations. We properly cite all original data sources and comply with their usage restrictions.

\subsection{Data Statistics}
We evaluate \ours on six public benchmarks spanning
standard vision datasets and diverse medical imaging tasks.
These datasets cover heterogeneous data modalities, multi-class classification,
and ordinal regression tasks, with dataset sizes ranging from thousands to
hundreds of thousands of samples.

\subsection*{Datasets Overview}

\begin{table*}[h] 
\centering
\resizebox{0.9\textwidth}{!}{
\begin{tabular}{l l l c c r r}
\toprule
\textbf{Dataset} 
& \textbf{Data Modality} 
& \textbf{Task (\# Classes/Labels)} 
& \textbf{Image Size} 
& \textbf{Channels} 
& \textbf{\# Samples} 
& \textbf{\# Train / Val / Test} \\
\midrule

FashionMNIST 
& Apparel Images 
& Multi-Class (10) 
& 28$\times$28 
& 1 
& 70,000 
& 60,000 / -- / 10,000 \\

CIFAR-10 
& Natural Images 
& Multi-Class (10) 
& 32$\times$32 
& 3 
& 60,000 
& 50,000 / -- / 10,000 \\

OCTMNIST 
& Retinal OCT
& Multi-Class (4) 
& 28$\times$28
& 1
& 109,309
& 97,477 / 10,832 / 1,000 \\

OrganAMNIST
& Abdominal CT
& Multi-Class (11) 
& 28$\times$28
& 1 
& 58,830
& 34,561 / 6,491 / 17,778 \\

RetinaMNIST 
& Fundus Camera 
& Ordinal Regression (5) 
& 28$\times$28 
& 3 
& 1,600 
& 1,080 / 120 / 400 \\

Diabetic Retinopathy
& Diabetic Retinopathy
& Ordinal regression (5)
& 224$\times$224
& 3 
& 2,000
& 1,400 / 300 / 300 \\

\bottomrule
\end{tabular}
}
\caption{Summary of datasets used in our experiments. We include standard vision benchmarks and diverse medical imaging datasets spanning classification and regression tasks.}
\end{table*}

\subsection*{Federated Split and Heterogeneity Design}

For all datasets, we simulate a cross-silo FL setting with $K=6$ clients. Data are first partitioned across clients using a Dirichlet distribution with
concentration parameter $\alpha\in\{0.1,0.3,0.5\}$, where smaller values induce
stronger heterogeneity. Unless otherwise specified, the same split procedure is
applied for each value of $\alpha$.

For classification tasks, this produces label skew across clients. For regression tasks (RetinaMNIST and Diabetic Retinopathy), we induce covariate shift by clustering feature representations into $B=5$ bins using K-means and applying the Dirichlet partition over these clusters.

After Dirichlet partitioning, each client's local dataset is further split into training, validation, and test subsets with proportions 60\%, 20\%, and 20\%, respectively. The training split is used for model optimization, the test split is used for evaluating negative transfer and robustness metrics, and the validation split is reserved for baselines that require an additional hold-out set (e.g., FedType in this work). Before splitting, each client’s data are randomly permuted to avoid ordering bias.

\paragraph{RetinaMNIST and Diabetic Retinopathy as Regression.}

RetinaMNIST and Diabetic Retinopathy are originally defined as ordinal classification problems with five ordered grades. In our work, we treat both datasets as regression tasks by modeling the ordered labels as numeric targets. This formulation enables evaluation of \ours under a regression-style setting while preserving the ordinal structure of disease severity levels.

\section{Study Design} \label{app:study_design}

\paragraph{Federated Setup.}
We simulate a cross-silo federated learning system with $K=6$ clients. All experiments are repeated over $10$ independent random seeds, and we report the mean and standard deviation across runs.

\paragraph{Controlled Heterogeneity.}
To rigorously evaluate \ours under realistic non-IID conditions, we introduce two controlled sources of heterogeneity: data heterogeneity and model asymmetry between private and proxy networks within each client.

\subsection*{Data Heterogeneity (Client-Level Distribution Shift)}

Heterogeneity is introduced during the initial Dirichlet partitioning stage, affecting the entire local dataset of each client. Smaller values of $\alpha$ induce stronger heterogeneity.

\begin{itemize}
    \item \textbf{Classification Tasks.}
    For binary and multi-class classification datasets, we apply Dirichlet label skew. For each class $c$, samples belonging to class $c$ are distributed across clients according to proportions drawn from a Dirichlet distribution. This produces label imbalance and heterogeneous class priors across clients.

    \item \textbf{Regression Tasks.}
    For regression tasks, covariate shift is induced by applying K-means clustering (B=5) directly on the input feature vectors (i.e., flattened images after preprocessing). No pretrained feature extractor is used; clustering is performed in the original input space.
\end{itemize}

%Each client trains its model using only its own local training data, and evaluation is performed on the corresponding local test split.
As a result, both training and evaluation are performed under heterogeneous client-specific data distributions.

\subsection*{Model Asymmetry (Private–Proxy Architecture)}

Model heterogeneity in \ours arises from the asymmetric design between private and proxy models rather than differences across clients.

Each client maintains two models: a high-capacity \textit{private model} for local learning and a lightweight \textit{proxy model} for cross-client knowledge exchange. While all clients share the same model architectures, heterogeneity is introduced through the distinct roles and capacities of these two models.

\subsubsection*{Classification Models}

For classification tasks, we use the following architectures:

\begin{itemize}
    \item \textbf{Private Model (PrivateCNN).}  
    A higher-capacity convolutional neural network with three convolutional layers (64 → 128 → 128 channels), followed by a fully connected layer (256 units) and a task-specific output head. This model captures rich feature representations and serves as the primary learner.

    \item \textbf{Proxy Model (ProxyCNN).}  
    A lightweight convolutional model with two convolutional layers (32 → 64 channels) and a smaller feature dimension (128 units). This model is used for communication and aggregation across clients.
\end{itemize}

Both models share the same input space but differ in representational capacity, creating an asymmetric knowledge transfer setting. This asymmetry is central to \ours, as it introduces potential discrepancies between models that motivate the need for reliability-aware knowledge transfer.

\subsubsection*{Regression Models}

For regression tasks, we use the same architectural design, with the output layer modified to produce a scalar prediction. The private model retains higher representational capacity, while the proxy model remains lightweight, preserving the asymmetric design across tasks.

\section{Worst $\Delta$} \label{worst_delta}

\begin{table}[H]
\caption{
Worst-case negative transfer (Worst $\Delta$) for classification tasks measured (higher is better). 
\ours consistently delivers the strongest worst-case robustness, 
substantially reducing the severity of negative transfer in the most vulnerable clients across all settings.
}
\label{tab:neg_transfer_worst}
\centering
\small
\setlength{\tabcolsep}{3pt}
\resizebox{\textwidth}{!}{
\begin{tabular}{l|l|ccc|ccc|ccc|ccc}
\toprule
\textbf{\shortstack{Agg.\\Method}} & \textbf{Model}
& \multicolumn{3}{c}{\textbf{FashionMNIST}}
& \multicolumn{3}{c}{\textbf{CIFAR-10}}
& \multicolumn{3}{c}{\textbf{OCTMNIST}}
& \multicolumn{3}{c}{\textbf{OrganAMNIST}} \\
\cmidrule(lr){3-5} \cmidrule(lr){6-8} \cmidrule(lr){9-11}
\cmidrule(lr){12-14}
& & $\alpha=0.1$ & $\alpha=0.3$ & $\alpha=0.5$
  & $\alpha=0.1$ & $\alpha=0.3$ & $\alpha=0.5$
  & $\alpha=0.1$ & $\alpha=0.3$ & $\alpha=0.5$
  & $\alpha=0.1$ & $\alpha=0.3$ & $\alpha=0.5$ \\
\midrule

FedDyn & -
& $-0.9786_{\pm 0.0089}$ & $-0.9299_{\pm 0.0205}$ & $-0.8861_{\pm 0.0302}$
& $-0.8820_{\pm 0.0668}$ & $-0.7355_{\pm 0.0547}$ & $-0.6397_{\pm 0.0363}$
& $-0.9888_{\pm 0.0166}$ & $-0.9380_{\pm 0.0383}$ & $-0.8566_{\pm 0.0788}$
& $-0.9818_{\pm 0.0112}$ & $-0.9381_{\pm 0.0174}$ & $-0.9071_{\pm 0.0261}$ \\

\midrule

FedProx & -
& $-0.7963_{\pm 0.0913}$ & $-0.4734_{\pm 0.1491}$ & $-0.3036_{\pm 0.0680}$
& $-0.7486_{\pm 0.0831}$ & $-0.5013_{\pm 0.0791}$ & $-0.4022_{\pm 0.0676}$
& $-0.9661_{\pm 0.0479}$ & $-0.6980_{\pm 0.2349}$ & $-0.5301_{\pm 0.1919}$
& $-0.7944_{\pm 0.1156}$ & $-0.5128_{\pm 0.1160}$ & $-0.3799_{\pm 0.0673}$ \\

FedProx & FedType
& $-0.0356_{\pm 0.0347}$ & $-0.0436_{\pm 0.0256}$ & \underline{$-0.0401_{\pm 0.0120}$}
& $-0.0606_{\pm 0.0308}$ & $-0.0472_{\pm 0.0169}$ & $-0.0226_{\pm 0.0127}$
& $-0.0843_{\pm 0.0768}$ & \underline{$-0.1038_{\pm 0.0835}$} & $-0.1158_{\pm 0.1023}$
& $-0.0604_{\pm 0.0413}$ & $-0.0449_{\pm 0.0377}$ & $-0.0307_{\pm 0.0326}$ \\

\midrule

FedAvg & -
& $-0.7704_{\pm 0.1353}$ & $-0.4832_{\pm 0.1397}$ & $-0.3170_{\pm 0.0687}$
& $-0.7279_{\pm 0.0590}$ & $-0.5099_{\pm 0.0803}$ & $-0.4122_{\pm 0.0569}$
& $-0.9673_{\pm 0.0471}$ & $-0.7240_{\pm 0.2145}$ & $-0.5488_{\pm 0.1809}$
& $-0.8155_{\pm 0.0966}$ & $-0.5445_{\pm 0.1125}$ & $-0.3786_{\pm 0.0711}$ \\

FedAvg & FedType
& \underline{$-0.0356_{\pm 0.0357}$} & \underline{$-0.0422_{\pm 0.0276}$} & $-0.0408_{\pm 0.0154}$
& \underline{$-0.0533_{\pm 0.0306}$} & \underline{$-0.0450_{\pm 0.0210}$} & \underline{$-0.0224_{\pm 0.0133}$}
& \underline{$-0.0725_{\pm 0.0469}$} & $-0.1085_{\pm 0.0863}$ & \underline{$-0.1117_{\pm 0.0937}$}
& \underline{$-0.0589_{\pm 0.0408}$} & \underline{$-0.0394_{\pm 0.0401}$} & \underline{$-0.0256_{\pm 0.0225}$} \\

\rowcolor{gray!12}
FedAvg & \textbf{\ours}
& $\mathbf{-0.0213}_{\pm 0.0138}$ & $\mathbf{-0.0274}_{\pm 0.0152}$ & $\mathbf{-0.0165}_{\pm 0.0130}$
& $\mathbf{-0.0312}_{\pm 0.0209}$ & $\mathbf{-0.0196}_{\pm 0.0077}$ & $\mathbf{-0.0217}_{\pm 0.0118}$
& $\mathbf{-0.0295}_{\pm 0.0325}$ & $\mathbf{-0.0226}_{\pm 0.0166}$ & $\mathbf{-0.0258}_{\pm 0.0183}$
& $\mathbf{-0.0281}_{\pm 0.0088}$ & $\mathbf{-0.0192}_{\pm 0.0131}$ & $\mathbf{-0.0186}_{\pm 0.0126}$ \\

\bottomrule
\end{tabular}
}
\end{table}

\begin{table}[H]
\caption{
Worst-case negative transfer (Worst $\Delta$) for regression tasks measured (lower is better). 
\ours consistently minimizes worst-case degradation, showing strong robustness against extreme negative transfer.
}
\label{tab:neg_transfer_worst_reg}
\centering
\small
\setlength{\tabcolsep}{4pt}
\resizebox{\textwidth}{!}{
\begin{tabular}{l|l|ccc|ccc}
\toprule
\textbf{\shortstack{Agg.\\Method}} & \textbf{Model}
& \multicolumn{3}{c}{\textbf{RetinaMNIST}}
& \multicolumn{3}{c}{\textbf{Diabetic Retinopathy}} \\
\cmidrule(lr){3-5} \cmidrule(lr){6-8}
& & $\alpha=0.1$ & $\alpha=0.3$ & $\alpha=0.5$
  & $\alpha=0.1$ & $\alpha=0.3$ & $\alpha=0.5$ \\
\midrule

FedDyn & -
& $1.3692_{\pm 0.5883}$ & $1.5318_{\pm 0.4742}$ & $1.2250_{\pm 0.2285}$
& $2.2816_{\pm 2.6428}$ & $1.4682_{\pm 0.4773}$ & $1.2619_{\pm 0.3380}$ \\

\midrule

FedProx & -
& $0.3950_{\pm 0.1353}$ & \underline{$0.3467_{\pm 0.1374}$} & $0.2990_{\pm 0.0752}$
& $0.5142_{\pm 0.1458}$ & $0.4311_{\pm 0.1383}$ & $0.4039_{\pm 0.0820}$ \\

\midrule

FedAvg & -
& $\underline{0.3917_{\pm 0.1336}}$ & $0.3480_{\pm 0.1419}$ & $\underline{0.2968_{\pm 0.0665}}$
& $\underline{0.5074_{\pm 0.1478}}$ & \underline{$0.4243_{\pm 0.1285}$} & \underline{$0.3922_{\pm 0.0695}$} \\

\rowcolor{gray!12}
FedAvg & \textbf{\ours}
& $\mathbf{0.1663}_{\pm 0.0945}$ & $\mathbf{0.1783}_{\pm 0.1003}$ & $\mathbf{0.1384}_{\pm 0.0703}$
& $\mathbf{0.2230}_{\pm 0.2000}$ & $\mathbf{0.2072}_{\pm 0.1259}$ & $\mathbf{0.2080}_{\pm 0.1079}$ \\

\bottomrule
\end{tabular}
}
\end{table}

\section{P10 and P90} \label{Pp}

\begin{table}[H]
\caption{
Measurement of lower-tail negative transfer (P10 $\Delta$) for classification tasks (higher is better). 
\ours consistently improves lower-tail client robustness, showing that the reduction in negative transfer also extends to clients near the bottom of the performance distribution.
}
\label{tab:neg_transfer_p10}
\centering
\small
\setlength{\tabcolsep}{3pt}
\resizebox{\textwidth}{!}{
\begin{tabular}{l|l|ccc|ccc|ccc|ccc}
\toprule
\textbf{\shortstack{Agg.\\Method}} & \textbf{Model}
& \multicolumn{3}{c}{\textbf{FashionMNIST}}
& \multicolumn{3}{c}{\textbf{CIFAR-10}}
& \multicolumn{3}{c}{\textbf{OCTMNIST}}
& \multicolumn{3}{c}{\textbf{OrganAMNIST}} \\
\cmidrule(lr){3-5} \cmidrule(lr){6-8} \cmidrule(lr){9-11}
\cmidrule(lr){12-14}
& & $\alpha=0.1$ & $\alpha=0.3$ & $\alpha=0.5$
  & $\alpha=0.1$ & $\alpha=0.3$ & $\alpha=0.5$
  & $\alpha=0.1$ & $\alpha=0.3$ & $\alpha=0.5$
  & $\alpha=0.1$ & $\alpha=0.3$ & $\alpha=0.5$ \\
\midrule

FedDyn & -
& $-0.9643_{\pm 0.0105}$ & $-0.9155_{\pm 0.0234}$ & $-0.8671_{\pm 0.0237}$
& $-0.8334_{\pm 0.0618}$ & $-0.6924_{\pm 0.0474}$ & $-0.6221_{\pm 0.0340}$
& $-0.9607_{\pm 0.0323}$ & $-0.8589_{\pm 0.0588}$ & $-0.7769_{\pm 0.0730}$
& $-0.9709_{\pm 0.0124}$ & $-0.9291_{\pm 0.0172}$ & $-0.8955_{\pm 0.0227}$ \\

\midrule

FedProx & -
& $-0.7016_{\pm 0.0852}$ & $-0.4220_{\pm 0.1321}$ & $-0.2599_{\pm 0.0564}$
& $-0.7173_{\pm 0.0722}$ & $-0.4732_{\pm 0.0677}$ & $-0.3736_{\pm 0.0529}$
& $-0.8866_{\pm 0.0832}$ & $-0.5540_{\pm 0.2042}$ & $-0.3796_{\pm 0.1078}$
& $-0.7348_{\pm 0.1044}$ & $-0.4622_{\pm 0.0925}$ & $-0.3272_{\pm 0.0536}$ \\

FedProx & FedType
& $-0.0252_{\pm 0.0250}$ & $-0.0322_{\pm 0.0193}$ & \underline{$-0.0330_{\pm 0.0115}$}
& $-0.0436_{\pm 0.0241}$ & $-0.0340_{\pm 0.0149}$ & $-0.0168_{\pm 0.0085}$
& $-0.0589_{\pm 0.0434}$ & \underline{$-0.0721_{\pm 0.0517}$} & $-0.0840_{\pm 0.0565}$
& $-0.0415_{\pm 0.0256}$ & $-0.0270_{\pm 0.0213}$ & \underline{$-0.0213_{\pm 0.0183}$} \\

\midrule

FedAvg & -
& $-0.6859_{\pm 0.1080}$ & $-0.4279_{\pm 0.1250}$ & $-0.2802_{\pm 0.0512}$
& $-0.7097_{\pm 0.0611}$ & $-0.4765_{\pm 0.0641}$ & $-0.3760_{\pm 0.0420}$
& $-0.8935_{\pm 0.0837}$ & $-0.5701_{\pm 0.1961}$ & $-0.3999_{\pm 0.1023}$
& $-0.7576_{\pm 0.1004}$ & $-0.4880_{\pm 0.0932}$ & $-0.3357_{\pm 0.0606}$ \\

FedAvg & FedType
& \underline{$-0.0242_{\pm 0.0229}$} & \underline{$-0.0302_{\pm 0.0202}$} & $-0.0339_{\pm 0.0133}$
& \underline{$-0.0372_{\pm 0.0274}$} & \underline{$-0.0285_{\pm 0.0131}$} & \underline{$-0.0154_{\pm 0.0081}$}
& \underline{$-0.0554_{\pm 0.0287}$} & $-0.0777_{\pm 0.0531}$ & \underline{$-0.0806_{\pm 0.0519}$}
& \underline{$-0.0385_{\pm 0.0247}$} & \underline{$-0.0239_{\pm 0.0230}$} & $-0.0179_{\pm 0.0142}$ \\

\rowcolor{gray!12}
FedAvg & \textbf{\ours}
& $\mathbf{-0.0181}_{\pm 0.0119}$ & $\mathbf{-0.0188}_{\pm 0.0097}$ & $\mathbf{-0.0095}_{\pm 0.0068}$
& $\mathbf{-0.0224}_{\pm 0.0134}$ & $\mathbf{-0.0147}_{\pm 0.0067}$ & $\mathbf{-0.0151}_{\pm 0.0070}$
& $\mathbf{-0.0177}_{\pm 0.0182}$ & $\mathbf{-0.0158}_{\pm 0.0126}$ & $\mathbf{-0.0188}_{\pm 0.0118}$
& $\mathbf{-0.0233}_{\pm 0.0082}$ & $\mathbf{-0.0137}_{\pm 0.0081}$ & $\mathbf{-0.0141}_{\pm 0.0102}$ \\

\bottomrule
\end{tabular}
}
\end{table}

\begin{table}[H]
\caption{
Measurement of upper-tail negative transfer (P90 $\Delta$) for regression tasks (lower is better). 
\ours consistently achieves the lowest P90 $\Delta$, indicating strong robustness in high-error clients and effective mitigation of severe negative transfer.
}
\label{tab:p90_delta_reg}
\centering
\small
\setlength{\tabcolsep}{4pt}
\resizebox{\textwidth}{!}{
\begin{tabular}{l|ccc|ccc}
\toprule
\textbf{Method}
& \multicolumn{3}{c}{\textbf{RetinaMNIST}}
& \multicolumn{3}{c}{\textbf{Diabetic Retinopathy}} \\
\cmidrule(lr){2-4} \cmidrule(lr){5-7}
& $\alpha=0.1$ & $\alpha=0.3$ & $\alpha=0.5$
& $\alpha=0.1$ & $\alpha=0.3$ & $\alpha=0.5$ \\
\midrule

FedDyn
& $1.1817_{\pm 0.4682}$ & $1.3385_{\pm 0.3937}$ & $1.0808_{\pm 0.1915}$
& $2.0140_{\pm 2.5796}$ & $1.2947_{\pm 0.3894}$ & $1.1125_{\pm 0.2936}$ \\

FedProx
& $0.3237_{\pm 0.1144}$ & \underline{$0.3038_{\pm 0.1097}$} & \underline{$0.2624_{\pm 0.0665}$}
& $0.4231_{\pm 0.1342}$ & $0.3686_{\pm 0.1087}$ & $0.3458_{\pm 0.0902}$ \\

FedAvg
& \underline{$0.3231_{\pm 0.1136}$} & $0.3028_{\pm 0.1108}$ & $0.2607_{\pm 0.0638}$
& \underline{$0.4217_{\pm 0.1375}$} & \underline{$0.3658_{\pm 0.1048}$} & \underline{$0.3413_{\pm 0.0867}$} \\

\rowcolor{gray!12}
\textbf{\ours}
& $\mathbf{0.1151}_{\pm 0.0677}$ & $\mathbf{0.1292}_{\pm 0.0749}$ & $\mathbf{0.0979}_{\pm 0.0473}$
& $\mathbf{0.1680}_{\pm 0.1369}$ & $\mathbf{0.1551}_{\pm 0.0976}$ & $\mathbf{0.1519}_{\pm 0.1019}$ \\

\bottomrule
\end{tabular}
}
\end{table}

\section{Accuracy \& RMSE} \label{sec:acc_n_rmse}

\begin{table}[H]
\caption{
Mean client accuracy for classification tasks (higher is better). 
Although \ours is designed primarily to reduce negative transfer rather than to optimize accuracy alone, it remains consistently top-tier in mean client accuracy across datasets and heterogeneity levels.
}
\label{tab:mean_client_acc_main}

\centering
\small
\setlength{\tabcolsep}{3pt}
\resizebox{\textwidth}{!}{
\begin{tabular}{c|c|ccc|ccc|ccc|ccc}
\hline
\textbf{\shortstack{Agg.\\Method}} & \textbf{Model}
& \multicolumn{3}{c|}{\textbf{FashionMNIST}}
& \multicolumn{3}{c|}{\textbf{CIFAR-10}}
& \multicolumn{3}{c|}{\textbf{OCTMNIST}}
& \multicolumn{3}{c}{\textbf{OrganAMNIST}} \\

\cline{3-14}
& & $\alpha=0.1$ & $\alpha=0.3$ & $\alpha=0.5$
  & $\alpha=0.1$ & $\alpha=0.3$ & $\alpha=0.5$
  & $\alpha=0.1$ & $\alpha=0.3$ & $\alpha=0.5$
  & $\alpha=0.1$ & $\alpha=0.3$ & $\alpha=0.5$ \\

\hline

FedDyn & -
& $0.1029_{\pm 0.0422}$ & $0.0915_{\pm 0.0119}$ & $0.0960_{\pm 0.0141}$
& $0.1203_{\pm 0.0360}$ & $0.0928_{\pm 0.0107}$ & $0.0987_{\pm 0.0112}$
& $0.2730_{\pm 0.1068}$ & $0.3426_{\pm 0.0759}$ & $0.3446_{\pm 0.0740}$
& $0.1064_{\pm 0.0599}$ & $0.0638_{\pm 0.0109}$ & $0.0632_{\pm 0.0178}$ \\

FedProx & -
& $0.4832_{\pm 0.0668}$ & $0.6381_{\pm 0.0605}$ & $0.6805_{\pm 0.0353}$
& $0.2210_{\pm 0.0725}$ & $0.3187_{\pm 0.0257}$ & $0.3484_{\pm 0.0106}$
& $0.4069_{\pm 0.1013}$ & $0.5702_{\pm 0.1271}$ & $0.6327_{\pm 0.0381}$
& $0.4068_{\pm 0.1088}$ & $0.5829_{\pm 0.0547}$ & $0.6425_{\pm 0.0387}$ \\

FedAvg & -
& $0.4734_{\pm 0.0807}$ & $0.6304_{\pm 0.0588}$ & $0.6706_{\pm 0.0339}$
& $0.2238_{\pm 0.0750}$ & $0.3127_{\pm 0.0269}$ & $0.3432_{\pm 0.0115}$
& $0.4014_{\pm 0.1023}$ & $0.5664_{\pm 0.1289}$ & $0.6235_{\pm 0.0385}$
& $0.3900_{\pm 0.1082}$ & $0.5697_{\pm 0.0584}$ & $0.6304_{\pm 0.0430}$ \\

FedAvg & FedType
& \underline{$0.9251_{\pm 0.0241}$} & $\underline{0.8724_{\pm 0.0287}}$ & $\underline{0.8496_{\pm 0.0196}}$
& $\underline{0.7714_{\pm 0.0393}}$ & \underline{$0.6644_{\pm 0.0285}$} & $\mathbf{0.6303}_{\pm 0.0236}$
& $\underline{0.8811_{\pm 0.0376}}$ & $\underline{0.8383_{\pm 0.0348}}$ & $\underline{0.7930_{\pm 0.0547}}$
& $\underline{0.9266_{\pm 0.0133}}$ & $\mathbf{0.8876}_{\pm 0.0169}$ & $\mathbf{0.8722}_{\pm 0.0126}$ \\

FedProx & FedType
& $0.9242_{\pm 0.0247}$ & $0.8719_{\pm 0.0277}$ & $0.8481_{\pm 0.0202}$
& $0.7709_{\pm 0.0356}$ & $0.6625_{\pm 0.0264}$ & \underline{$0.6283_{\pm 0.0272}$}
& $0.8794_{\pm 0.0409}$ & $0.8371_{\pm 0.0355}$ & $0.7917_{\pm 0.0551}$
& $0.9260_{\pm 0.0125}$ & $0.8860_{\pm 0.0160}$ & $0.8696_{\pm 0.0128}$ \\

\rowcolor{gray!12}
FedAvg & \textbf{\ours}
& $\mathbf{0.9273}_{\pm 0.0192}$ & $\mathbf{0.8813}_{\pm 0.0232}$ & $\mathbf{0.8657}_{\pm 0.0161}$
& $\mathbf{0.7730}_{\pm 0.0380}$ & $\mathbf{0.6649}_{\pm 0.0289}$ & $0.6256_{\pm 0.0243}$
& $\mathbf{0.9018}_{\pm 0.0323}$ & $\mathbf{0.8655}_{\pm 0.0296}$ & $\mathbf{0.8212}_{\pm 0.0429}$
& $\mathbf{0.9303}_{\pm 0.0105}$ & \underline{$0.8869_{\pm 0.0114}$} & $\underline{0.8688_{\pm 0.0127}}$ \\

\hline
\end{tabular}
}
\end{table}

\begin{table}[H]
\caption{
Mean client RMSE for regression tasks (lower is better). 
While our primary objective is to mitigate negative transfer, \ours also achieves the lowest RMSE across both regression benchmarks under all heterogeneity levels.
}
\label{tab:mean_rmse_main}

\centering
\small
\setlength{\tabcolsep}{3pt}
\resizebox{\textwidth}{!}{
\begin{tabular}{c|c|ccc|ccc}
\hline
\textbf{\shortstack{Agg.\\Method}} & \textbf{Model}
& \multicolumn{3}{c|}{\textbf{RetinaMNIST}}
& \multicolumn{3}{c}{\textbf{Diabetic Retinopathy}} \\

\cline{3-8}
& & $\alpha=0.1$ & $\alpha=0.3$ & $\alpha=0.5$
  & $\alpha=0.1$ & $\alpha=0.3$ & $\alpha=0.5$ \\

\hline

FedDyn & -
& $1.8538_{\pm 0.2255}$ & $2.0796_{\pm 0.2472}$ & $2.0110_{\pm 0.1302}$
& $2.6852_{\pm 2.3840}$ & $2.0728_{\pm 0.3605}$ & $1.9893_{\pm 0.1440}$ \\

FedProx & -
& $\underline{1.3892_{\pm 0.0945}}$ & $1.3835_{\pm 0.0815}$ & $1.3947_{\pm 0.0456}$
& $1.4925_{\pm 0.1789}$ & $1.4914_{\pm 0.1018}$ & \underline{$1.4840_{\pm 0.0874}$} \\

FedAvg & -
& $1.3907_{\pm 0.0962}$ & $\underline{1.3793_{\pm 0.0846}}$ & $\underline{1.3934_{\pm 0.0439}}$
& \underline{$1.4894_{\pm 0.1804}$} & \underline{$1.4896_{\pm 0.1006}$} & $1.4842_{\pm 0.0861}$ \\

\rowcolor{gray!12}
FedAvg & \textbf{\ours}
& $\mathbf{1.2428}_{\pm 0.0868}$ & $\mathbf{1.2360}_{\pm 0.0781}$ & $\mathbf{1.2625}_{\pm 0.0480}$
& $\mathbf{1.3072}_{\pm 0.1773}$ & $\mathbf{1.3164}_{\pm 0.0896}$ & $\mathbf{1.3276}_{\pm 0.1061}$ \\

\hline
\end{tabular}
}
\end{table}

\section{Role of Energy-based Gating} \label{sec:role_en_gating}

To evaluate the role of the energy function in reliability-aware gating, we consider several alternative formulations that capture different notions of uncertainty and disagreement. In all cases, the energy score is first converted into a batch-normalized
relative energy $\widetilde{E}_i=(E_i-\mu_B)/(s_B+\epsilon_B)$ and then mapped
to a trust weight $w_i=\rho(-\beta\widetilde{E}_i)$, following
Eq.~\ref{eq:trust_weight}. Thus, samples with higher relative energy within
the minibatch receive lower trust weights.

\begin{table}[H]
\caption{
Ablation study of backward distillation strategies under different energy functions. 
\textit{No Gating} applies ungated full-trust backward distillation.
Alternative energy functions (\textit{entropy}, \textit{margin}, \textit{log-sum-exp}, and \textit{feature-based} distances) are evaluated under the same training protocol. 
\ours consistently achieves the best or second-best in reducing negative transfer and maintaining stability.
}
\label{tab:ablation_energy_cls}

\centering
\small
\setlength{\tabcolsep}{3pt}
\resizebox{\textwidth}{!}{
\begin{tabular}{c|c|ccc|ccc|ccc|ccc}
\hline
\textbf{\shortstack{Agg.\\Method}} & \textbf{Model}
& \multicolumn{3}{c|}{\textbf{FashionMNIST}}
& \multicolumn{3}{c|}{\textbf{CIFAR-10}}
& \multicolumn{3}{c|}{\textbf{OCTMNIST}}
& \multicolumn{3}{c}{\textbf{OrganAMNIST}} \\

\cline{3-14}
& & $\alpha=0.1$ & $\alpha=0.3$ & $\alpha=0.5$
  & $\alpha=0.1$ & $\alpha=0.3$ & $\alpha=0.5$
  & $\alpha=0.1$ & $\alpha=0.3$ & $\alpha=0.5$
  & $\alpha=0.1$ & $\alpha=0.3$ & $\alpha=0.5$ \\

\hline

FedAvg & No Gating
& $-0.0636_{\pm 0.0295}$ & $-0.2545_{\pm 0.0629}$ & $-0.1921_{\pm 0.0353}$
& $-0.0416_{\pm 0.0211}$ & $-0.3540_{\pm 0.0431}$ & $-0.2840_{\pm 0.0308}$
& $-0.0285_{\pm 0.0254}$ & $-0.3003_{\pm 0.1252}$ & $-0.2001_{\pm 0.0448}$
& $-0.0495_{\pm 0.0187}$ & $-0.3193_{\pm 0.0669}$ & $-0.2408_{\pm 0.0404}$ \\

FedAvg & ENTROPY
& $-0.0201_{\pm 0.0139}$ & $-0.0133_{\pm 0.0103}$ & $\underline{-0.0006_{\pm 0.0043}}$
& $-0.0096_{\pm 0.0091}$ & $-0.0078_{\pm 0.0073}$ & $-0.0056_{\pm 0.0045}$
& $-0.0064_{\pm 0.0146}$ & $-0.0057_{\pm 0.0135}$ & $-0.0064_{\pm 0.0146}$
& $-0.0140_{\pm 0.0091}$ & $-0.0075_{\pm 0.0071}$ & $\underline{-0.0044_{\pm 0.0044}}$ \\

FedAvg & MARGIN
& $-0.0195_{\pm 0.0113}$ & $-0.0109_{\pm 0.0074}$ & $-0.0028_{\pm 0.0086}$
& $-0.0060_{\pm 0.0131}$ & $-0.0073_{\pm 0.0068}$ & $-0.0048_{\pm 0.0065}$
& $-0.0068_{\pm 0.0129}$ & $\underline{-0.0038_{\pm 0.0074}}$ & $\underline{-0.0048_{\pm 0.0089}}$
& $-0.0147_{\pm 0.0097}$ & $-0.0089_{\pm 0.0082}$ & $-0.0034_{\pm 0.0054}$ \\

FedAvg & LSE
& $-0.0108_{\pm 0.0083}$ & $\underline{-0.0070_{\pm 0.0053}}$ & $-0.0013_{\pm 0.0047}$
& $-0.0083_{\pm 0.0107}$ & $-0.0043_{\pm 0.0061}$ & $-0.0052_{\pm 0.0060}$
& $\underline{-0.0055_{\pm 0.0111}}$ & $-0.0040_{\pm 0.0109}$ & $-0.0055_{\pm 0.0111}$
& $-0.0110_{\pm 0.0094}$ & $\underline{-0.0068_{\pm 0.0069}}$ & $-0.0048_{\pm 0.0053}$ \\

FedAvg & FEAT
& \underline{$-0.0089_{\pm 0.0062}$} & $-0.0074_{\pm 0.0070}$ & $-0.0041_{\pm 0.0073}$
& $\mathbf{-0.0014}_{\pm 0.0070}$ & \underline{$-0.0028_{\pm 0.0041}$} & $\mathbf{-0.0011}_{\pm 0.0045}$
& $-0.0088_{\pm 0.0131}$ & $-0.0050_{\pm 0.0125}$ & $-0.0088_{\pm 0.0131}$
& \underline{$-0.0099_{\pm 0.0083}$} & $-0.0101_{\pm 0.0087}$ & $-0.0093_{\pm 0.0057}$ \\

\rowcolor{gray!12}
FedAvg & \textbf{\ours}
& $\mathbf{-0.0065}_{\pm 0.0066}$ & $\mathbf{-0.0035}_{\pm 0.0050}$ & $\mathbf{0.0031}_{\pm 0.0035}$
& \underline{$-0.0033_{\pm 0.0098}$} & $\mathbf{-0.0017}_{\pm 0.0051}$ & \underline{$-0.0016_{\pm 0.0048}$}
& $\mathbf{-0.0040}_{\pm 0.0084}$ & $\mathbf{-0.0013}_{\pm 0.0089}$ & $\mathbf{-0.0024}_{\pm 0.0096}$
& $\mathbf{-0.0078}_{\pm 0.0060}$ & $\mathbf{-0.0021}_{\pm 0.0064}$ & $\mathbf{-0.0024}_{\pm 0.0048}$ \\

\hline
\end{tabular}
}
\end{table}

\textbf{Entropy-based energy.}
We first consider predictive entropy as a measure of model uncertainty. Given logits $z \in \mathbb{R}^C$ and the corresponding probability vector $p = \sigma(z)$, the entropy energy is defined as
\begin{equation}
E_{\text{entropy}}(x) = - \sum_{c=1}^{C} p_c \log (p_c + \epsilon),
\end{equation}
where $\epsilon$ is a small constant for numerical stability. Low entropy corresponds to confident predictions and thus low energy, while high entropy indicates uncertainty and leads to higher energy.

\textbf{Margin-based energy.}
Margin energy measures the confidence gap between the top two predicted classes. Let $z_{(1)}$ and $z_{(2)}$ denote the largest and second-largest logits, respectively. The margin-based energy is defined as
\begin{equation}
E_{\text{margin}}(x) = - \left( z_{(1)} - z_{(2)} \right).
\end{equation}
A larger margin indicates more confident predictions and thus lower energy, while a small margin reflects ambiguity and results in higher energy.

\textbf{LogSumExp (LSE) energy.}
We also consider the classical energy-based score derived from the LogSumExp operator:
\begin{equation}
E_{\text{LSE}}(x) = - \log \sum_{c=1}^{C} \exp(z_c).
\end{equation}
This formulation is widely used in energy-based models and out-of-distribution detection, where lower values correspond to higher confidence.

\textbf{Feature-distance (FEAT) energy.}
Finally, we consider a feature-space discrepancy between the proxy and private models. Let $f_{\text{proxy}}(x)$ and $f_{\text{private}}(x)$ denote the intermediate feature representations extracted from the two models. The feature-distance energy is defined as
\begin{equation}
E_{\text{feat}}(x) = \|h_{\text{proxy}}(x) - h_{\text{private}}(x)\|_2,
\end{equation}
where $h_{\text{private}}(x)$ denotes the projected private feature representation, obtained via a linear projection layer to match the proxy feature dimension. This formulation directly measures representation-level disagreement rather than output-level uncertainty. 

\section{Sensitivity to $\lambda_{\text{kd}}$} \label{sec:sensitive_2_lambda}

\begin{figure}[H]
    \centering
    \includegraphics[width=\linewidth]{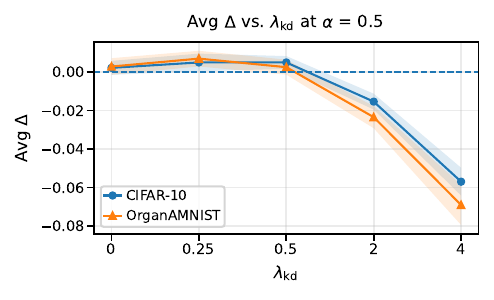}
    \caption{Avg $\Delta$ across $\lambda_{\text{kd}}$ values on CIFAR-10 and OrganAMNIST at $\alpha = 0.5$. In practice, $\lambda_{\text{kd}} \in [0.25, 0.5]$ works better under this setting.}
    \label{fig:figure4}
\end{figure}

%%%%%%%%%%%%%%%%%%%%%%%%%%%%%%%%%%%%%%%%%%%%%%%%%%%%%%%%%%%%

\end{document}